\documentclass{article}

\usepackage{microtype}
\usepackage{graphicx}
\usepackage{booktabs} 
\usepackage{enumitem}

\usepackage{hyperref}
\usepackage{graphicx}
\usepackage{subcaption}
\usepackage{multirow}
\usepackage{wrapfig}

\usepackage[accepted]{icml2026}

\usepackage{amsmath}
\usepackage{amssymb}
\usepackage{mathtools}
\usepackage{amsthm}
\usepackage{bbm}

\usepackage[capitalize,noabbrev]{cleveref}

\theoremstyle{plain}
\newtheorem{theorem}{Theorem}[section]

\newtheorem{lemma}[theorem]{Lemma}

\theoremstyle{definition}

\theoremstyle{remark}

\usepackage{pgfplots}
\pgfplotsset{compat=1.17}
\usepackage[textsize=tiny]{todonotes}

\icmltitlerunning{RoDiF: Robust Direct Fine-Tuning of Diffusion Policies with Corrupted Human Feedback}

\begin{document}

\twocolumn[
\icmltitle{
RoDiF: Robust Direct Fine-Tuning of Diffusion Policies \\with Corrupted Human Feedback}



\icmlsetsymbol{equal}{*}

\begin{icmlauthorlist}
\icmlauthor{Amitesh Vatsa}{asu}
\icmlauthor{Zhixian Xie}{asu}
\icmlauthor{Wanxin Jin}{asu}
\end{icmlauthorlist}

\icmlaffiliation{asu}{Intelligent Robotics and Interactive Systems (IRIS) Lab, Arizona State University, Tempe AZ, USA. 
\\
Emails: \url{amitesh.vatsa.cd.mec20@iitbhu.ac.in}, \url{zxxie@asu.edu}, \url{wanxin.jin@asu.edu}}


\icmlkeywords{Machine Learning, ICML}

\vskip 0.3in
]



\printAffiliationsAndNotice{}  

\begin{abstract}
Diffusion policies are a powerful paradigm for robotic control, but fine-tuning them with human preferences is fundamentally challenged by the multi-step structure of the denoising process. To overcome this, we introduce a Unified Markov Decision Process (MDP) formulation that coherently integrates the diffusion denoising chain with environmental dynamics, enabling reward-free Direct Preference Optimization (DPO) for diffusion policies. Building on this formulation, we propose \emph{RoDiF} (Robust Direct Fine-Tuning), a method that explicitly addresses corrupted human preferences. RoDiF reinterprets the DPO objective through a geometric hypothesis-cutting perspective and employs a conservative cutting strategy to achieve robustness without assuming any specific noise distribution. Extensive experiments on long-horizon  manipulation tasks show that RoDiF consistently outperforms state-of-the-art baselines, effectively steering pretrained   diffusion policies of diverse architectures to human-preferred modes, while maintaining strong performance even under 30\% corrupted preference labels.
\end{abstract}

\vspace{-10pt}
\section{Introduction}
\label{submission}

\begin{figure}[t]
    \centering
    \begin{subfigure}[t]{0.35\textwidth}
        \centering
        \includegraphics[width=\linewidth]{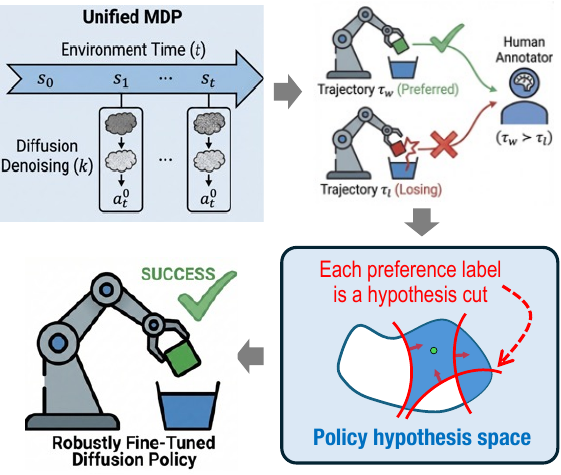}
    \end{subfigure}
    \caption{Overview of RoDiF. We introduce a Unified MDP to model the rollout of a diffusion policy, where human preferences are defined over trajectories in this unified decision process. To address noisy human feedback, RoDiF reinterprets the DPO objective through a geometric hypothesis-cutting perspective and adopts a conservative cutting strategy, enabling robust direct fine-tuning of diffusion policies.}
    \label{fig:overview}
    \vspace{-20pt}
\end{figure}

Diffusion models have recently emerged as a compelling class of policies for robotic control \cite{ho2020denoising,chi2025diffusion}, demonstrating superior capability in capturing multi-modal distributions and handling high-dimensional action spaces compared to traditional Gaussian or deterministic policies \cite{chi2025diffusion}. While Behavior Cloning (BC) with diffusion models excels at many complex robot behaviors, such as contact-rich manipulation \cite{wolf2025diffusion} and locomotion \cite{yuan2025survey}, the obtained diffusion policies are fundamentally limited by the quality of the training data. To enable robots to improve beyond their demonstrations or align with task/user specifics, it is necessary to fine-tune these policies using feedback signals.

Reinforcement Learning (RL) offers a pathway for such fine-tuning, but typically requires dense, well-engineered reward functions \cite{ma2025efficient, ren2024diffusionpolicypolicyoptimization}, which are notoriously difficult to specify for complex  tasks \cite{tan2025robo}. Consequently,  preference-based reinforcement learning (PbRL) \cite{abdelkareem2022advances} has gained traction as an alternative, allowing agents to optimize behavior based on comparative feedback (e.g., $A \succ B$) rather than scalar rewards. In the domain of Large Language Models, Direct Preference Optimization (DPO) \cite{rafailov2023direct} has established a new paradigm by deriving a closed-form objective that optimizes the policy directly against preferences, bypassing the need for an explicit reward model.

However, extending DPO-style objectives to robotic diffusion policies presents a non-trivial theoretical challenge due to a structural mismatch in the generation process. Unlike autoregressive language models, a diffusion policy operates via a multi-step denoising chain nested within the environmental MDP. Existing preference optimization methods do not natively account for this dual temporal structure—where optimization must occur over the internal denoising steps while respecting the external environmental dynamics. Furthermore, human feedback in robotic control is stochastic and often contains \emph{model-mismatched} inconsistencies (e.g., occasional label flips, annotator disagreement, or nonstationary criteria) beyond the standard Bradley--Terry noise assumption. Naively fine-tuning on such data can overfit to conflicting comparisons, effectively optimizing to satisfy erroneous constraints rather than the underlying human intent, which degrades both alignment and task performance.

In this work, we propose \textbf{RoDiF} (Robust Direct Fine-Tuning), a framework  to fine-tune pre-trained diffusion policies with noisy human preferences. We use a \textit{Unified MDP} formulation that treats the diffusion denoising steps and environmental transitions as one decision process, enabling the derivation of a tractable DPO objective for diffusion policies. To address the challenge of label noise, we reformulate the optimization objective through the geometric lens of \textit{hypothesis cutting}. By adopting a conservative cutting strategy, we derive a robust loss function that effectively ignoring the inconsistent feedback without requiring prior knowledge of  noise distribution.
Our contributions are:
\vspace{-5pt}
\begin{itemize}[leftmargin=*, nosep]
    \item We propose a \textbf{Unified MDP} formulation that integrates the diffusion denoising process with environmental dynamics, enabling direct preference optimization for diffusion policies without explicit reward modeling.
    
    \item We introduce \textbf{RoDiF}, a geometrically grounded algorithm that robustly fine-tunes diffusion policies by leveraging conservative hypothesis cutting while mitigating the impact of inconsistency preference labels.
    
    \item We empirically demonstrate that RoDiF significantly outperforms existing baselines across diverse diffusion backbones on high-dimensional, long-horizon manipulation tasks, maintaining high alignment performance even under 30\% corrupted preference labels.
\end{itemize}

\vspace{-5pt}
\section{Related Work}
\vspace{-5pt}
\subsection{Preference Based Policy Learning}
\vspace{-5pt}
\paragraph{Reward-Based Methods}
A common approach to learning policies from human preferences is to first infer a reward model from human preferences and then optimize a policy using RL \cite{christiano2017deep} or trajectory optimization \cite{jin2022learning}. Early works adopted active learning \cite{ActiveBook}, while are largely limited to linear reward parameterization.
Recent PbRL methods leverage neural networks to learn expressive reward models \cite{pebble, park2022surf, liang2022reward, liu2022meta, seneviratne2025halo},  by modeling preferences probabilistically (e.g., Bradley–Terry \cite{bradley1952rank} or Plackett–Luce \cite{plackett1975analysis}). In contrast, our method bypasses  reward learning and directly fine-tunes  diffusion policies from human preferences, leading tosimpler implementation.

\vspace{-5pt}
\paragraph{Reward-Free Methods} 
\vspace{-5pt}
To reduce computational overhead and improve efficiency, recent work \cite{an2023direct,rafailov2023direct,hejna2023inverse, hejna2024contrastive, luu2025policy, xiahuman} explores direct policy learning from human preferences without explicitly learning reward models. These methods  re-parameterize action values with policy-dependent embeddings, enabling policies to be optimized directly from preference labels. Representative approaches  include Direct Preference Optimization (DPO) \cite{rafailov2023direct} and its extensions (e.g., CPO \cite{xu2024contrastive} and SimPO \cite{meng2024simpo} to reduce computation cost, IPO \cite{azar2024general} to reduce overfitting), which  directly optimize policies under preference supervision. Our method also builds on this paradigm but through formulating sequential denoising steps and environment interactions within a unified MDP.

\vspace{-5pt}
\subsection{Robust Preference Learning}
\vspace{-5pt}
A key challenge in preference-based policy learning is the inherent noise in human preference data. To address this issue, recent robust PbRL methods employ techniques such as data filtering \cite{cheng2024rime, li2024candere, huang2025trend}, data augmentation \cite{heo2025mixing}, network architecture design \cite{xue2023reinforcement}, and hypothesis space cutting \cite{xie2025robust} to learn reliably from imperfect feedback. Robust preference-based learning has also been extended to reward-free settings: works such as \cite{ramesh2024group, kim2025lightweight, cao2025latent} adopt conservative variants of the Direct Preference Optimization (DPO) loss to mitigate overfitting to noisy labels. Our method introduces a novel geometry-inspired robust objective based on hypothesis space cutting \cite{xie2025robust}, enabling stable fine-tuning under noisy human preferences.

\vspace{-5pt}
\subsection{Diffusion Policy and its Fine-Tuning}
\vspace{-5pt}

Diffusion policies \cite{chi2025diffusion, ze20243d, xue2025reactive} achieve strong performance in robotic control but are typically trained via behavioral cloning, limiting their ability to improve beyond demonstration data. Recent work, such as DDPO \cite{Black2023TrainingDM} and D3PO \cite{Yang2023UsingHF}, formulates the diffusion denoising process as a sequential decision problem and extends Direct Preference Optimization \cite{rafailov2023direct} to enable post-learning refinement. However, these approaches focus on a static problem setting (e.g. image generation) and do not explicitly incorporate interactions with the physical environment.

When fine-tuning diffusion  policies \cite{ma2025efficient, ren2024diffusionpolicypolicyoptimization, 2025arXiv250108259C, shan2024forwardklregularizedpreference}, an additional layer of sequential decision-making arises from environment dynamics. DPPO \cite{ren2024diffusionpolicypolicyoptimization} and RSM \cite{ma2025efficient} addresses this by applying reinforcement learning with explicit rewards, while FDPP \cite{2025arXiv250108259C} extends this framework to reward-based fine-tuning from human preferences. In contrast, FKPD \cite{shan2024forwardklregularizedpreference} enables reward-free fine-tuning via KL regularization and ELBO-based approximations; however, these approximations inevitably introduce estimation bias.
Our method achieves DPO-style  fine-tuning, but jointly leveraging the full denoising trajectory and perform  direct factorization to calculate the likelihood of the action. This eliminates the need for objective approximations and reduces the bias.

\vspace{-5pt}
\section{Preliminaries}
\vspace{-5pt}
\subsection{Diffusion Policy}\label{sec:prelim-dp}
\vspace{-5pt}
Diffusion policies \cite{chi2025diffusion, ze20243d, xue2025reactive} model action chunk generation as a sequential denoising process, built on diffusion probabilistic models \cite{ho2020denoising}. Conditioned on a system state observation $\mathbf{s}$, the policy starts from a noise distribution $\mathbf{a}^K \sim \mathcal{N}(\mathbf{0}, \mathbf{I})$ and iteratively refines a noisy action sequence through a fixed $K$ number of denoising steps parameterized by $\theta$:
\begin{equation}
\begin{aligned}
\label{eq:denosing-sampler}
\small
    \mathbf{a}^{k-1} &\sim p_\theta(\mathbf{a}^{k-1} \mid \mathbf{a}^k, \mathbf{s}), \\
    p_\theta
    &\triangleq \mathcal{N}\!\left(\mathbf{a}^{k-1};\, \boldsymbol{\mu}_k\!\left(\mathbf{a}^k, \epsilon_\theta(\mathbf{a}^k, k, \mathbf{s})\right), \boldsymbol{\Sigma}_k\right),
\end{aligned}
\end{equation}
where $k = K, K-1, ..., 1$ is the step index, $\epsilon_\theta(\cdot)$ denotes a neural noise predictor, $\boldsymbol{\mu}_k(\cdot)$ is the denoising operator, and $\boldsymbol{\Sigma}_k$ is the step-dependent variance. Diffusion policies are typically learned via imitation learning by fitting $\epsilon_\theta(\cdot)$ to the injected noise in expert demonstrations. This class of policies is particularly effective at modeling complex, multi-modal action distributions \cite{wolf2025diffusion}.

\vspace{-5pt}
\subsection{Bradley-Terry Preference Model} 
\vspace{-5pt}
The Bradley--Terry model \cite{bradley1952rank} is a probabilistic formulation to model  human pairwise preferences. Given two options $a$ and $b$ with utility scores $u_a, u_b \in \mathbb{R}$, the probability that human prefers  $a$  over $b$ is defined as
\begin{equation}
\label{eq:bradley_terry}
    \mathbb{P}(a \succ b) = \frac{\exp(u_a)}{\exp(u_a) + \exp(u_b)}.
\end{equation}
In preference-based learning \cite{abdelkareem2022advances}, items typically correspond to agent trajectories or state--action pairs, and the utility score $u$ is parameterized by a reward- or policy-dependent function.

\subsection{Unified  Diffusion-Policy MDP Formulation}
\label{subsec:unified_mdp}

We consider  a diffusion policy (Section \ref{sec:prelim-dp}) acting on an environment MDP, $\mathcal{M}_{\text{env}}{=}(\mathcal{S}_{\text{env}},\mathcal{A}_{\text{env}},P_{\text{env}},R_{\text{env}},\rho_{\text{env}})$. Here,  $\mathbf{s}\in\mathcal{S}_{\text{env}}$,   $\mathbf{a}\in\mathcal{A}_{\text{env}}$, 
$\mathbf{s}_0\in\rho_{\text{env}}$, $P_{\text{env}}(\mathbf{s}_{t+1}|\mathbf{s}_t,\mathbf{a}_t)$, and $R_{\text{env}}(\mathbf{s}_t,\mathbf{a}_t)$ are the environment state, action, intial state, transition, and reward, respectively.

As a notation convention, we denote the denoising step inside the diffusion policy by the inverse-time index $k$ (superscript) and the environment rollout  step by $t$ (subscript). At the environment state $\mathbf{s}_t$  at rollout time $t$, the denoising process within the diffusion policy is modeled as a Markov Chain $(\mathbf{a}^K_t, \mathbf{a}^{K-1}_t, \dots, \mathbf{a}^{0}_t)$ over a horizon of $K+1$ steps ~\cite{black2024trainingdiffusionmodelsreinforcement}, where $\mathbf{a}_t^k$ represents the denoising state at the denoising step $k$. This denoising chain is conditioned on the environment state $\mathbf{s}_t$ and evolves following the reverse denoising process $p_\theta(\mathbf{a}^{k-1}_t|\mathbf{a}^{k}_t,\mathbf{s}_t)$ parameterized by $\theta$ in (Eq. \ref{eq:denosing-sampler}),
with drawing a random $\mathbf{a}_t^K\sim\mathcal{N}(\boldsymbol{0},\mathbf{I})$.
The environment  MDP evolves according to the trajectory  $(\mathbf{s}_0, \mathbf{a}_0, \mathbf{s}_1, \mathbf{a}_1, \dots, \mathbf{s}_t, \mathbf{a}_t, \dots)$, where each action $\mathbf{a}_t$ corresponds to the final denoising state of the corresponding diffusion  process, i.e., $\mathbf{a}_t = \mathbf{a}_t^0$.

We employ a Unified MDP formulation to integrate both the diffusion denoising process and the environment rollout process. Specifically, we define the Unified MDP as a tuple $\mathcal{M}_{\text{unified}} = (\mathcal{S}^+, \mathcal{A}^+, P^+, R^+, \rho_0^+)$. The unified system state $\mathbf{s}^+ \in \mathcal{S}^+$ and action $\mathbf{a}^+ \in \mathcal{A}^+$ are defined as:
\begin{equation}\label{eq:fmdp-state-action}
\begin{aligned}
    \mathbf{s}^+_t &\triangleq (\mathbf{a}^{K}_t, \mathbf{s}_t)
    \\
\mathbf{a}^+_t &\triangleq (\mathbf{a}^{K-1}_t, \mathbf{a}^{K-2}_t, ..., \mathbf{a}^{0}_t)
\end{aligned}
\end{equation}
The reward is defined as:
\begin{equation}
\label{eq:fmdp-reward}
    R^+(\mathbf{s}^+_t, \mathbf{a}^+_t )\triangleq R_{\text{env}}(\mathbf{s}_t,\mathbf{a}_t)=R_{\text{env}}(\mathbf{s}_t,\mathbf{a}_t^0)
\end{equation}
The transition of this flattened MDP is deifned as 
\begin{equation}
    {P}^+(\mathbf{s}^+_{t+1}|\mathbf{s}^+_t, \mathbf{a}^+_t)\triangleq 
        \mathcal{N}(0,\mathbf{I})\otimes P_{\text{env}}(\mathbf{s}_{t+1}|\mathbf{s}_{t},\mathbf{a}^{0}_t)
\end{equation}
with the initial state distribution  $\rho^+\triangleq \mathcal{N}(0,\mathbf{I})\otimes\rho_{\text{env}}$.

Under the above formulation of Unified MDP, we define its policy for the Unified MDP as $\pi (\mathbf{a}^+_t| \mathbf{s}^+_t)$. Based on the definition in (Eq. \ref{eq:fmdp-state-action}) and also the Markov properties of the denoising chain \cite{black2024trainingdiffusionmodelsreinforcement}, we have the following factorization of the Unified-MDP policy 
\begin{equation}
\label{eq:policy-chain}
     \pi_{\theta}(\mathbf{a}^+_t| \mathbf{s}^+_t) = \prod\nolimits_{k=1}^{K}p_\theta(\mathbf{a}_t^{k-1}|\mathbf{a}_t^{k},\mathbf{s}_t).
\end{equation}
where $p_\theta(\mathbf{a}^{k-1}|\mathbf{a}^{k},\mathbf{s})$ is defined in (\ref{eq:denosing-sampler}).

\subsection{DPO with Unified MDP}
\label{subsec:udpo_derivation}

Under the  Unified MDP formulation, we consider optimizing the policy $\pi$  given an optimal state-action value function $Q^*(\mathbf{s}^+,\mathbf{a}^+)$ corresponding to the \emph{unknown} reward $R^+$ in (\ref{eq:fmdp-reward}). We  define our objective as 
\begin{equation}
\label{eq:rl_objective}
        \max_{\pi} \quad \mathbb{E}_{\begin{subarray}{l} \mathbf{s}^+\sim d^{\pi} \\ \mathbf{a}^+\sim\pi(\cdot|\mathbf{s}^+) \end{subarray}}
        Q^*(\mathbf{s}^+,\mathbf{a}^+)-\beta \mathbf{KL}(\pi||\pi_{\text{ref}})
\end{equation}
where $\pi_{\text{ref}}(\mathbf{a}^+|\mathbf{s}^+)$ is a reference policy for the unified MDP. The above objective means that the policy needs to be fine-tuned to maximize the  value of $Q^*$, while a regularization term $-\beta \mathbf{KL}(\pi||\pi_{\text{ref}})$ prevents drifting away from the reference policy $\pi_{\text{ref}}$. $\beta>0$ controls the regulization strength.

Following \cite{peters2007reinforcement, peng2019advantage},  \eqref{eq:rl_objective} permits a closed-form optimal policy  given an $Q^*$ function.
\begin{equation}
\small
    \pi^*(\mathbf{a}^+|\mathbf{s}^+) {=} \frac{1}{Z(\mathbf{s}^+)} \pi_{\text{ref}}(\mathbf{a}^+|\mathbf{s}^+) \exp \left( \frac{1}{\beta} Q^*(\mathbf{s}^+, \mathbf{a}^+) \right),
    \label{eq:optimal_policy}
\end{equation}
where $Q^*$ is the given optimal state-action value function and $Z(s^+)$ is the partition function. 

Rearranging Eq.~\eqref{eq:optimal_policy}, we can express the optimal $Q^*$ purely in terms of the likelihood ratio between the optimal policy and the reference policy:
\begin{equation}
    Q^*(\mathbf{s}^+, \mathbf{a}^+) = \beta \log \frac{\pi^*(\mathbf{a}^+|\mathbf{s}^+)}{\pi_{\text{ref}}(\mathbf{a}^+|\mathbf{s}^+)} + \beta \log Z(\mathbf{s}^+).
    \label{eq:q_value_inversion}
\end{equation}
The above equation is a critical reparameterization trick \cite{rafailov2023direct} in the following preference optimization: the policy itself can be used to express a corresponding state-action value, eliminating the need to approximating or compute $Q^*$ itself.

\vspace{-5pt}
\subsubsection{Advantage Bradley-Terry Preference}
\vspace{-5pt}
Typically, Bradley-Terry preference model \eqref{eq:bradley_terry} is defined  on the reward function ${R}^+$. However,  such models have been shown to be inconsistent with real human preferences \cite{knox2024learning}: for instance, consider a sparse goal-reaching reward . Two segments that do not reach the goal would have the same returns even if one moved towards the goal while the other moved away from it. Thus, we follow \cite{knox2024learning,hejna2023contrastive} and choose per-step advantage function for the human preference model. 

Specifically, define the human's implicit state-action value function $Q^*(\mathbf{s}^+,\mathbf{a}^+)$ and  value function $V^*(\mathbf{s}^+)$. Human's implicit advantage function is defined as $A^*(\mathbf{s}^+,\mathbf{a}^+)\triangleq Q^*(\mathbf{s}^+,\mathbf{a}^+){-}V^*(\mathbf{s}^+)$.
Consider two actions  $(\mathbf{a}^{+}_w,\mathbf{a}^{+}_l)$ drawn from   policy   $\pi(\cdot | \mathbf{s}^+ )$, The human ranks the two actions as $(\mathbf{a}^{+}_w|\mathbf{s}^+\succ\mathbf{a}^{+}_{l}|\mathbf{s}^+)$, i.e., $\mathbf{a}^{+}_w$ is preferred than $\mathbf{a}^{+}_{l}$,  according to the probability
\begin{equation}
        P(\mathbf{a}^{+}_w|\mathbf{s}^+\succ\mathbf{a}^{+}_{l}|\mathbf{s}^+) {=}\frac{e^{ A^*(\mathbf{a}^{+}_w,\mathbf{s}^+)/\alpha}}{e^{ A^*(\mathbf{a}^{+}_w,\mathbf{s}^+)/\alpha}{+}e^{A^*(\mathbf{a}^{+}_l,\mathbf{s}^+)/\alpha}}\\
    \label{eq:bradley_terry_advantage}
\end{equation}
where $\alpha>0$ is the temperature parameter for the  Boltzmann distribution (intuitively, the human rationality). By simplification, the above \eqref{eq:bradley_terry_advantage} can be reduced to  
\begin{equation}
\label{eq:BT-model}
\begin{aligned}    P(\mathbf{a}^{+}_w|\mathbf{s}^+\succ\mathbf{a}^{+}_{l}|\mathbf{s}^+)&=\sigma_{\alpha} \left(  \Delta Q^* (\mathbf{a}^{+}_w|\mathbf{s}^+, \mathbf{a}^{+}_l|\mathbf{s}^+)
     \right)\\
    \Delta Q^* (\mathbf{a}^{+}_w|\mathbf{s}^+, \mathbf{a}^{+}_l|\mathbf{s}^+)&\triangleq  Q^*(\mathbf{s}^+, \mathbf{a}^+_w) 
     -  Q^*(\mathbf{s}^+, \mathbf{a}^+_l).
\end{aligned}
\end{equation}
We use the action-value function $Q^*$ rather than the advantage function $A^*$ in \eqref{eq:BT-model}, since the state-value term $V^*(\mathbf{s}^+)$ cancels out when taking differences between advantages.

\vspace{-5pt}
\subsubsection{Per-Control DPO for Diffusion Policies}
\vspace{-5pt}
Under the reparameterization trick in (\ref{eq:q_value_inversion}), we can parameterize the unknown human $Q^*$ function in \eqref{eq:BT-model} with the parametrized policy $\pi_{\theta}$. Also, considering the definition and Markov property of the diffusion policy $\pi_\theta$ in (\ref{eq:policy-chain}) in the Unified MDP, we have 
\vspace{-5pt}
\begin{multline}
\label{eq:c-function-diffusion}
         \Delta Q_{\theta} (\mathbf{a}^{+}_{w}|\mathbf{s}^+,\mathbf{a}^{+}_{l}|\mathbf{s}^+)=\beta {\sum_{k=1}^K \log \frac{p_\theta(\mathbf{a}_{ w}^{k-1} | \mathbf{a}_{ w}^{k}, \mathbf{s})}{\pi_{\mathrm{ref}}(\mathbf{a}_{ w}^{k-1} | \mathbf{a}_{w}^{k}, \mathbf{s})}} \\
     - \beta {\sum\nolimits_{k=1}^K \log \frac{p_\theta(\mathbf{a}_{ l}^{k-1} | \mathbf{a}_{ l}^{k}, \mathbf{s})}{\pi_{\mathrm{ref}}(\mathbf{a}_{ l}^{k-1} | \mathbf{a}_{ l}^{k}, \mathbf{s})}},
\end{multline}
Thus, given a dataset of human preference labels for per-control of diffusion policy  $\mathcal{D}=\{(\mathbf{a}^{+}_{i,w}|\mathbf{s}^+_i\succ\mathbf{a}^{+}_{i,l}|\mathbf{s}^+_i)\}_{i=1}^{N}$ the  DPO objective for fine-tuning the diffusion policy policy $\pi_{\theta}$ can be formulated as maximizing the likelihood \cite{rafailov2023direct} of the advantage preference probabilities (\ref{eq:BT-model}):
\begin{equation}
    \label{eq:dpo-peraction}
        \mathcal{L}_{\text{DPO}}(\theta){=}-\sum\nolimits_{i=1}^N
    \log\sigma_{\alpha}
    \left(\Delta Q_{\theta} (\mathbf{a}^{+}_{i,w}|\mathbf{s}_i^+,\mathbf{a}^{+}_{i,l}|\mathbf{s}_i^+)
    \right).
\end{equation}

\vspace{-5pt}
\subsubsection{Extension to Trajectory Comparison}
\label{sec:traj_compare}
\vspace{-5pt}
While the above DPO formulation of per-control preference has nice theoretical meaning, but it can be incontinent for practice because providing per-control preference labels $\mathcal{D}$ can be labor-consuming for human users. Instead, it's more viable for human users to provide per-trajectory/episode preference labels. Suppose a diffusion policy $\pi_{\theta}$ generates two trajectories that are compared by a human observer, denoted as $(\tau_w\succ\tau_r)$, where $\tau_w$ and $\tau_r$ represent the winning (preferred) and losing (not preferred) trajectories, respectively.
We can leverage the assumption: if a trajectory $\tau_w$ is preferred over $\tau_l$, then every constituent state-action pair $(\mathbf{s}_{t,w}^+, \mathbf{a}_{t,w}^+)$ in the winning trajectory $\tau_w$ is likely superior to the corresponding pair $(\mathbf{s}_{t,l}^+, \mathbf{a}_{t,l}^+)$  in  $\tau_l$. Accordingly, for  preference label of a pair $(\tau_w, \tau_l)$ of trajectories of length $T$, we perform $T$ comparisons of $\Delta Q_\theta(\mathbf{a}^{+}_{t,w}|\mathbf{s}_{t,w}^+,\mathbf{a}^{+}_{t,l}|\mathbf{s}_{t,l}^+)$ and compute the total loss
\begin{equation}
\label{eq:segment-dpo}
            \mathcal{L}(\theta, \tau_w\succ \tau_l) \triangleq{-} \sum_{t=0}^{T-1} \log \sigma_{\alpha} \Big( \Delta Q_\theta(\mathbf{a}^{+}_{t,w}|\mathbf{s}_{t,w}^+,\mathbf{a}^{+}_{t,l}|\mathbf{s}_{t,l}^+) \Big)
\end{equation}
Given $M$ trajectory pairs,  the per-control diffusion policy DPO loss (\ref{eq:dpo-peraction}) can be extended into 
\begin{equation}
\begin{aligned}
\mathcal{L}_{\text{DP-DPO}}(\theta) &\triangleq\sum\nolimits_{i=1}^{M} \mathcal{L}(\theta, \tau_w^{(i)}\succ \tau_l^{(i)}) 
\end{aligned}
\label{eq:dense_loss}
\end{equation}

\vspace{-5pt}
\section{Robust Direct Fine-Tuning}
\label{sec:robust_finetuning}
\vspace{-0pt}
The standard DPO formulation assumes preferences derive from a single Bradley-Terry model. While this captures intrinsic ambiguity, it fails to account for model-mismatched inconsistencies, such as label flips or annotator errors, which we term \textbf{corrupted preference labels}. Since these out-of-distribution labels render optimization brittle \cite{lee2021b}, we propose a fine-tuning objective robust to such corruption without assuming a known corruption data distribution.

\vspace{-5pt}
\subsection{Interpreting DPO Objective as Hypothesis Cutting}
\label{subsec:geometric_dense}
\vspace{-5pt}
Let $\Theta$ be the entire hypothesis space for the parameterized diffusion policies (\ref{eq:policy-chain}). 
Define $\theta^H \in \Theta$  the true diffusion policy parameter aligned with human preferences. Based on the reparameterization in~\eqref{eq:c-function-diffusion}, $\Delta Q_{\theta^H} = \Delta Q^*$ in (\ref{eq:BT-model}).

In \eqref{eq:BT-model}, if we let the temperature $\alpha\rightarrow 0$,  $\sigma_\alpha(\cdot)$ will become the  Heaviside step function  $\mathbf{H}(\cdot)$, and Bradley-Terry model becomes
\begin{equation}
\label{eq:consistent_label}
P(\mathbf{a}^{+}_w|\mathbf{s}^+\succ\mathbf{a}^{+}_{l}|\mathbf{s}^+){=}\begin{cases}
        1&  \Delta Q^* (\mathbf{a}^{+}_{w}|\mathbf{s}^+,\mathbf{a}^{+}_{l}|\mathbf{s}^+)\geq0 \\
        0& \text{otherwise}
    \end{cases}.
\end{equation}
Due to $\Delta Q_{\theta^H} = \Delta Q^*$, we can interpret from (\ref{eq:consistent_label}) that each human preference pair $(\mathbf{a}^{+}_{i,w}|\mathbf{s}_i^+ \succ\mathbf{a}^{+}_{i,l}|\mathbf{s}_i^+)$ will induce an inequality constraint $\mathcal{C}_i$  on the policy parameter $\theta$:
\begin{equation}
\label{eq:cut_i}
    \mathcal{C}_i\triangleq\left\{\theta\in\Theta\,|\,\Delta Q_{\theta} \big(\mathbf{a}^{+}_{i,w}|\mathbf{s}_i^+,\mathbf{a}^{+}_{i,l}|\mathbf{s}_i^+\big)\geq0 \right\}.
\end{equation}
and the true policy parameter $\theta^{H}\in \mathcal{C}_i$.

In another perspective, as  in Fig. \ref{fig:cuts} (left), $\mathcal{C}_i$ can be interpreted as a \textbf{cut} of the hypothesis space $\Theta$, removing the hypothesis space portion which does not contain  $\theta^H$. In this geometry interpretation, $\sigma_{\alpha\rightarrow 0}(\Delta Q_{\theta})=\mathbf{H}(\Delta Q_{\theta})$ becomes an indicator function of the cut $\mathcal{C}_i$, 
\begin{equation}\label{eq:voting}
    \mathbf{H}\left(\Delta Q_{\theta} (\mathbf{a}^{+}_{i,w}|\mathbf{s}_i^+,\mathbf{a}^{+}_{i,l}|\mathbf{s}_i^+)
    \right){=}\begin{cases}
        1& \theta\in\mathcal{C}_i \\
        0& \text{else}
    \end{cases}
\end{equation}
\vspace{-5pt}
Therefore, the diffusion policy 
DPO loss  (\ref{eq:dpo-peraction}) equivalent to 
\begin{equation}
\label{eq:prod-voting}
   \max_\theta \quad \prod\nolimits_{i=1}^N \mathbf{H}\left(\Delta Q_{\theta} (\mathbf{a}^{+}_{i,w}|\mathbf{s}_i^+,\mathbf{a}^{+}_{i,l}|\mathbf{s}_i^+)
    \right),
\end{equation}
where the product of indicator functions  is equivalent to taking the intersection of all preference-induced cuts $\mathcal{C}_i$, $i=1,2,..,N$.
As more cuts are applied, the interacting hypothesis space gets progressively \emph{shrunk}, eventually, we can retain $\theta^H$ from the shrinking interaction set $\theta^H \in \Theta \cap \mathcal{C}_1\cap\mathcal{C}_2...\cap\mathcal{C}_N$.

\vspace{-5pt}
\subsection{Robust Direct Fine-Tuning Objective}
\vspace{-5pt}

\begin{figure}[h]
    \centering
    \begin{subfigure}[t]{0.18\textwidth}
        \centering
        \includegraphics[width=\linewidth]{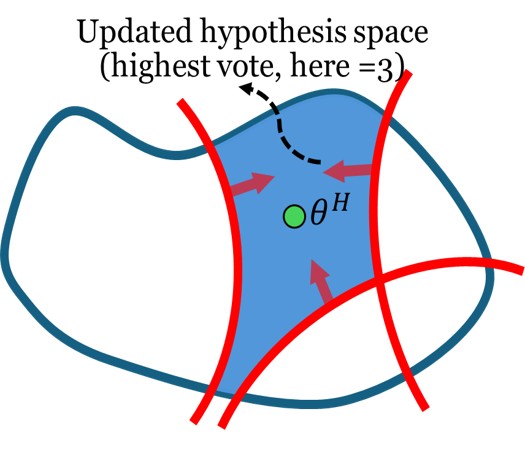}
    \end{subfigure}
    \hspace{20pt}
    \begin{subfigure}[t]{0.18\textwidth}
        \centering
        \includegraphics[width=\linewidth]{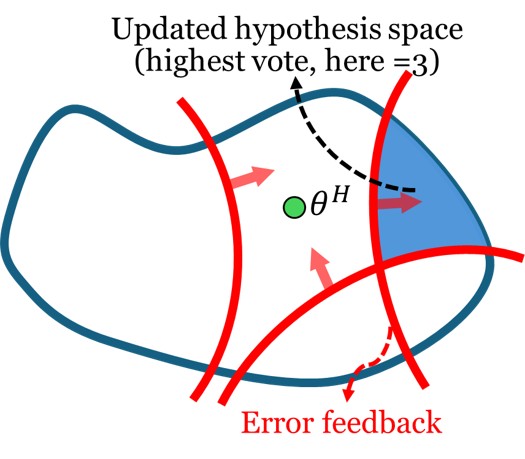}
    \end{subfigure}
    \vspace{-5pt}
    \caption{Illustration of intersecting all induced cuts. Left: all three preferences are consistent with \eqref{eq:consistent_label}, The aligned parameter $\theta^H$ falls into the intersection. Right: In case of one corrupted preference, the updated hypothesis space does not contain $\theta^H$.}
    \label{fig:cuts}
    \vspace{-10pt}
\end{figure}

Human preference labels can be inherently inconsistent, e.g., $(\mathbf{a}^+_w|\mathbf{s}^+ \succ \mathbf{a}^+_l|\mathbf{s}^+)$ is labeled as $(\mathbf{a}^+_w|\mathbf{s}^+ \prec \mathbf{a}^+_l|\mathbf{s}^+)$), due to factors such as fatigue/errors, disagreement across annotators, or nonstationary criteria over time. 
Thus, in the following we will develop a robust  mechanism based on the above geometric interpretation of hypothesis cutting. 

To proceed, we view each $\mathbf{H}(\Delta Q_{\theta}(\cdot,\cdot))$ in \eqref{eq:voting} as a voting function over the hypothesis space: Among the hypotheses in $\Theta$,  $\theta$s that satisfy the cut $\mathcal{C}_i$ in (\ref{eq:cut_i}) receive ``+1" vote, since $\mathbf{H}(\Delta Q_{\theta}(\cdot,\cdot))$ outputs 1. In contrast, parameters that violate the cut receive ``0" vote. We use $V_{\mathcal{C}_i}(\theta)$ to denote the vote generated by $i$-th cut $\mathcal{C}_i$ (\ref{eq:cut_i}):
\begin{equation}
    \label{eq:single_vote}
    V_{\mathcal{C}_i}(\theta)\triangleq \mathbf{H}\left(\Delta Q_{\theta} (\mathbf{a}^{+}_{i,w}|\mathbf{s}_i^+,\mathbf{a}^{+}_{i,l}|\mathbf{s}_i^+)\right)
\end{equation}

With this notation of the votes, the following lemma  states an equivalent solution set  to (\ref{eq:prod-voting}) and also the implication if the human preference label set contains corrupted labels.
\begin{lemma}
\label{lemma:cuts}
 Consider a  dataset of human preference labels  $\mathcal{D}{=}\{(\mathbf{a}^{+}_{i,w}|\mathbf{s}^+_i\succ\mathbf{a}^{+}_{i,l}|\mathbf{s}^+_i)\}_{i=1}^{N}$, and define $\theta^H$ as true diffusion policy parameter aligned with human preference. If all preference labels in $\mathcal{D}$ is  consistent with (\ref{eq:consistent_label}), then
\begin{equation}
    \theta^H\in\arg \max_\theta \quad \sum\nolimits_{i=1}^{N}V_{\mathcal{C}_i}(\theta)
    \label{eq:perfect_indicator}
\end{equation}
Otherwise when $\mathcal{D}$ contain corrupted preference labels, i.e.,  $\exists j, \Delta Q^*\big(\mathbf{a}^{+}_{j,w}|\mathbf{s}_j^+,\mathbf{a}^{+}_{j,l}|\mathbf{s}_j^+\big) < 0$, then $\theta^H$ cannot be found in the above solution set. 
\end{lemma}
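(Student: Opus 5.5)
The plan is to argue directly from the definition of the votes $V_{\mathcal{C}_i}$ in \eqref{eq:single_vote} and the identity $\Delta Q_{\theta^H}=\Delta Q^*$ that follows from the reparameterization \eqref{eq:c-function-diffusion}. The first observation is that each vote is an indicator, so $0\le V_{\mathcal{C}_i}(\theta)\le 1$ for every $\theta\in\Theta$. Hence the objective in \eqref{eq:perfect_indicator} is bounded above by $N$. Moreover, it equals $N$ exactly on the intersection $\mathcal{C}_1\cap\cdots\cap\mathcal{C}_N$. This already identifies the maximizers of the vote sum with the maximizers of the product \eqref{eq:prod-voting} whenever the intersection is nonempty.

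For the consistent case, I will take any label $i$ and use consistency with \eqref{eq:consistent_label}, namely $\Delta Q^*(\mathbf{a}^+_{i,w}|\mathbf{s}_i^+,\mathbf{a}^+_{i,l}|\mathbf{s}_i^+)\ge 0$. Substituting $\Delta Q_{\theta^H}=\Delta Q^*$ gives $\theta^H\in\mathcal{C}_i$ by \eqref{eq:cut_i}, so $V_{\mathcal{C}_i}(\theta^H)=1$. Summing over $i$ yields $\sum_i V_{\mathcal{C}_i}(\theta^H)=N$, which attains the upper bound, and therefore $\theta^H$ lies in the argmax.

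For the corrupted case, take an index $j$ with $\Delta Q^*(\mathbf{a}^+_{j,w}|\mathbf{s}_j^+,\mathbf{a}^+_{j,l}|\mathbf{s}_j^+)<0$. The same substitution gives $\Delta Q_{\theta^H}<0$, so $\theta^H\notin\mathcal{C}_j$ and $V_{\mathcal{C}_j}(\theta^H)=0$. Consequently $\sum_i V_{\mathcal{C}_i}(\theta^H)\le N-1$.

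The main obstacle is turning this bound into exclusion from the argmax. That step needs some $\theta'$ with a strictly larger vote count. The natural reading, consistent with \eqref{eq:prod-voting} and Fig.~\ref{fig:cuts}, is that the solution set is the intersection $\bigcap_i\mathcal{C}_i$, i.e., hypotheses collecting all $N$ votes. I will state explicitly that the policy class is expressive enough for the labeled cuts to have a nonempty intersection, as in the right panel of Fig.~\ref{fig:cuts}. Under that assumption the maximum value is $N>N-1$, and $\theta^H$ is excluded. Without the assumption, the claim reduces to the weaker but still meaningful statement that $\theta^H\notin\bigcap_i\mathcal{C}_i$, so that the cutting solution of \eqref{eq:prod-voting} loses $\theta^H$. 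I will present that weaker form as the rigorous content and note the expressiveness condition needed for the argmax version.
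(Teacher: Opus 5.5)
Your proposal is correct and takes essentially the same route as the paper's appendix proof. The vote sum is at most $N$ with equality exactly on $\bigcap_i\mathcal{C}_i$; in the consistent case $\Delta Q_{\theta^H}=\Delta Q^*$ puts $\theta^H$ in every cut, and in the corrupted case the same identity gives $\theta^H\notin\mathcal{C}_j$. The nonemptiness of $\bigcap_i\mathcal{C}_i$ that you flag is assumed silently in the paper, which just asserts that the solution set is the intersection. In fact it holds automatically whenever the reference policy lies in the class, i.e.\ $\pi_{\text{ref}}=p_{\theta_{\text{ref}}}$ with $\theta_{\text{ref}}\in\Theta$, as for the pretrained policy being fine-tuned: then $\Delta Q_{\theta_{\text{ref}}}\equiv 0$ by \eqref{eq:c-function-diffusion}, so under the $\geq 0$ convention of \eqref{eq:cut_i} $\theta_{\text{ref}}$ collects all $N$ votes, and your argmax version of the corrupted case needs no extra expressiveness condition.
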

\vspace{-15pt}
\begin{proof}
   The proof is given in Appendix \ref{appendix:proof1}.
\end{proof}
\vspace{-15pt}
We explain the above lemma using Fig.~\ref{fig:cuts} (right). Any corrupted preference label $(\mathbf{a}^{+}_{j,w}|\mathbf{s}_j^+,\mathbf{a}^{+}_{j,l}|\mathbf{s}_j^+)$ means that $\theta^H\notin\mathcal{C}_j$, i.e., the induced cut will  exclude the true $\theta^H$ from the current hypothesis space. Once $\theta^H$ is removed, all subsequent updates operate on a restricted hypothesis  space that no longer contains $\theta^H$, demonstrating how a single incorrect preference can irreversibly derail the learning process. 

To still robustly find the true policy $\theta^H$ under human corrupted preference label, we  employ a conservative voting strategy. The intuitive idea is stated below.  Reconsider the example in Fig.~\ref{fig:cuts} (right), where three hypothesis cuts are made, one of which corresponds to corrupted preference label. If we pick the regions of greater or equal two votes,  shown in Fig.~\ref{fig:robust}, then we can still guarantee the remaining hypothesis space (blue region in Fig.~\ref{fig:robust}) to contain $\theta^H$. 
\begin{figure}
    \centering
    \includegraphics[width=0.45\linewidth]{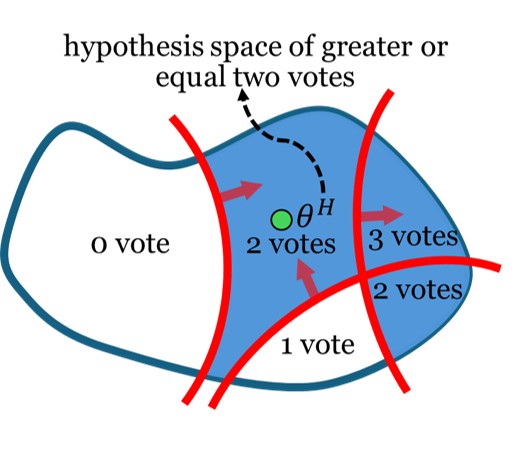}
    \vspace{-5pt}
    \caption{A conservative voting strategy to ensure robustness. By retaining only the hypotheses that receive at least two votes, the true policy parameter $\theta^H$ remains in the updated hypothesis space even when one out of three preferences is corrupted.}
    \label{fig:robust}
    \vspace{-15pt}
\end{figure}

Formally, we introduce a \textbf{conservativeness level} $\gamma \in [0,1]$, which represents an estimated upper bound on the ratio of the corrupted labels, meaning that \emph{at least} $\lfloor(1-\gamma)N\rfloor$ labels of given $N$ preference labels satisfy (\ref{eq:consistent_label}). 
Then, we have the following lemma stating a robust loss, whose solution  contains the true aligned diffusion policy parameter $\theta^H$.
\begin{lemma}
\label{lemma:robust}
Given a  dataset of $N$ human preference labels and   a \emph{conservative level} $\gamma\in(0,1)$ for conservative estimate of actual label corruption rate, i.e. \emph{at least} $\lfloor(1-\gamma)N\rfloor$ labels  satisfy (\ref{eq:consistent_label}), the human-aligned diffusion policy parameter is the solution to the  \textbf{robust direct fine-tuning loss} below
\vspace{-5pt}
    \begin{equation}
    \label{eq:robust_dpo_loss}
    \theta^H\in \arg\max_{\theta}\mathbf{H}\left(\sum_{i=1}^{N}V_{\mathcal{C}_i}(\theta) {-} \lfloor(1{-}\gamma)N\rfloor {+} 0.5 \right).
\end{equation}
\end{lemma}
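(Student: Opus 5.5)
The plan is to show directly that $\theta^H$ attains the maximal possible value of the objective in \eqref{eq:robust_dpo_loss}. Since $\mathbf{H}(\cdot)$ takes only the values $0$ and $1$, the maximum of the objective over $\Theta$ is at most $1$. It therefore suffices to prove that the objective evaluated at $\theta^H$ equals $1$. Then $\theta^H$ attains the supremum and lies in the $\arg\max$, regardless of what other maximizers exist.

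First, I would define the index set $\mathcal{I}_{\text{c}}\triangleq\{i : \Delta Q^*(\mathbf{a}^{+}_{i,w}|\mathbf{s}_i^+,\mathbf{a}^{+}_{i,l}|\mathbf{s}_i^+)\geq 0\}$ of labels consistent with \eqref{eq:consistent_label}. By the hypothesis on the conservativeness level $\gamma$, we have $|\mathcal{I}_{\text{c}}|\geq \lfloor(1-\gamma)N\rfloor$. Next, I would use the identity $\Delta Q_{\theta^H}=\Delta Q^*$, which follows from the reparameterization \eqref{eq:c-function-diffusion} and the definition of $\theta^H$ in Section~\ref{subsec:geometric_dense}. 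Together with \eqref{eq:cut_i} and \eqref{eq:single_vote}, this gives $\theta^H\in\mathcal{C}_i$, and hence $V_{\mathcal{C}_i}(\theta^H)=1$, for every $i\in\mathcal{I}_{\text{c}}$. This is the same step used for the consistent case of Lemma~\ref{lemma:cuts}. Because every vote is nonnegative, the corrupted labels contribute at least $0$, so
\begin{equation*}
\sum\nolimits_{i=1}^{N}V_{\mathcal{C}_i}(\theta^H)\;\geq\;|\mathcal{I}_{\text{c}}|\;\geq\;\lfloor(1-\gamma)N\rfloor .
\end{equation*}

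It then follows that the argument of $\mathbf{H}$ at $\theta^H$ is at least $0.5>0$, so the objective at $\theta^H$ equals $1$, which is the global maximum. The offset $+0.5$ is what removes any ambiguity about the convention for $\mathbf{H}(0)$. The vote sum and $\lfloor(1-\gamma)N\rfloor$ are both integers, so the argument of $\mathbf{H}$ is a half-integer and is never $0$.

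The main obstacle is not the inequality itself, which is elementary counting. It is making sure that $\theta^H$ satisfies the non-strict cut, $\Delta Q_{\theta^H}\geq 0$, for every consistent label. This depends on the boundary convention in \eqref{eq:consistent_label}, which places ties on the side satisfying the constraint. I would state explicitly that consistency means $\Delta Q^*\geq 0$, matching \eqref{eq:cut_i}, so that no tie case is lost. I would also remark that the result only asserts membership in the $\arg\max$, not uniqueness. This mirrors the conservative voting picture in Fig.~\ref{fig:robust}: the retained region may be larger than the set $\{\theta^H\}$, but it is guaranteed to contain $\theta^H$.
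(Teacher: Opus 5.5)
Your proposal is correct and follows essentially the same argument as the paper's proof. At least $\lfloor(1-\gamma)N\rfloor$ consistent labels each give $V_{\mathcal{C}_i}(\theta^H)=1$ via $\Delta Q_{\theta^H}=\Delta Q^*$, so the argument of $\mathbf{H}$ at $\theta^H$ is at least $0.5$ and the objective there equals $1$. Your remarks that $1$ is the global maximum of the objective, and that the $+0.5$ offset sidesteps the convention for $\mathbf{H}(0)$, make explicit what the paper leaves implicit.
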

\vspace{-15pt}
\begin{proof}
   The proof is given in Appendix \ref{appendix:proof2}.
\end{proof}
\vspace{-5pt}

Geometrically, the above direct fine-tuning loss relaxes the strict requirement of  all consistent cuts: a parameter $\theta$ remains in the hypothesis space even if it violates a limited number of cuts. In particular, the policy is allowed to ignore up to $\gamma N$ corruption cuts, thereby mitigating catastrophic “false cuts” caused by a small fraction of corrupted  labels. Specifically, if $\gamma$ is chosen as zero, the solution set in \eqref{eq:robust_dpo_loss} becomes the same as the one in \eqref{eq:perfect_indicator}, i.e.,  the corruption-free fine-tuning is a special case of robust fine-tuning.

\vspace{-5pt}
\subsection{Practical Loss for Robust Direct Fine-Tuning}
\label{subsec:robust_loss}
\vspace{-5pt}
To enable gradient-based optimization with the robust direct fine-tuning loss in (\ref{eq:robust_dpo_loss}), we soften the non-differentiable Heaviside step function $\mathbf{H}$ in \eqref{eq:single_vote} and \eqref{eq:robust_dpo_loss} using a sigmoid function $\sigma_\alpha(x) = \frac{1}{1 + e^{-x/\alpha }}$ with temperature $\alpha>0$.

For the trajectory pairs (recall Section \ref{sec:traj_compare}), we have the dataset $\{(\tau_w^{(i)}, \tau_l^{(i)})\}_{i=1}^{M}$ with each  trajectory of length $T$. Define the per-pair, per-step preference voting function (\ref{eq:single_vote}) by softening $\mathbf{H}$ using sigmoid function $\sigma_\alpha$
\begin{equation}
    \hat{V}_{\mathcal{C}_{i,t}}\triangleq\sigma_{\alpha} \Big( \Delta Q_\theta(\mathbf{a}^{+}_{i,t,w}|\mathbf{s}_{i,t,w}^+,\mathbf{a}^{+}_{i,t,l}|\mathbf{s}_{i,t,l}^+) \Big),
\end{equation}
for $i=1,2,...M$, $t=1,2,...,T$, with $ \Delta Q_\theta$ defined using the reparameterization   (\ref{eq:c-function-diffusion}), 
A soften version of the robust direct fine-tuning loss  (\ref{eq:robust_dpo_loss}) is defined 
\begin{equation}
    \mathcal{L}_{\text{RoDiF}}(\theta) = - \log \sigma_{\alpha} \left( \sum_{i=1}^{M}\sum_{t=1}^{T}\hat{V}_{\mathcal{C}_{i,t}}(\theta) - \nu(1-\gamma)MT \right).
    \label{eq:final_robust_loss}
\end{equation}
where $\nu \approx 1$ is a scaling factor which replaces the floor operation in  (\ref{eq:robust_dpo_loss}).


\section{Experiments}
\label{sec:experiments}

\vspace{-5pt}
In experiment,  we aim to answer the following questions:
\textbf{(1) Alignment Effectiveness:} Can RoDif effectively steer a pretrained multi-modal policy toward a  preferred behavior without degrading the policy's stability?
\textbf{(2) Robustness to Noise:} Does RoDif maintain performance when the preference dataset contains different levels of corrupted labels? \textbf{(3) Architecture Agnosticism:} Is our  method effective across different diffusion  backbones?

\begin{figure}[h]
    \centering
    \begin{subfigure}[t]{0.11\textwidth}
        \centering
        \includegraphics[width=\linewidth]{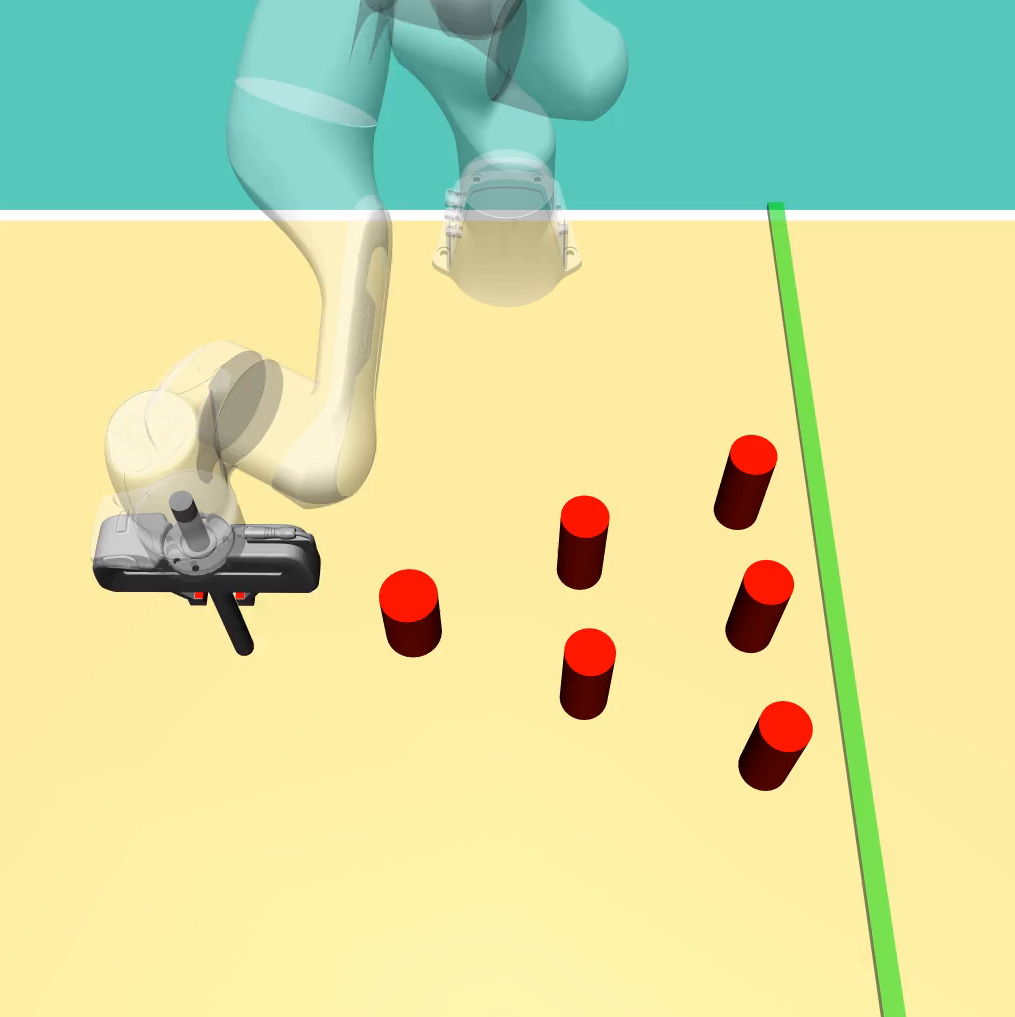}
        \caption{Avoid}
        \label{fig:task_avoid}
    \end{subfigure}\hspace{10pt}
    \begin{subfigure}[t]{0.11\textwidth}
        \centering
        \includegraphics[width=\linewidth]{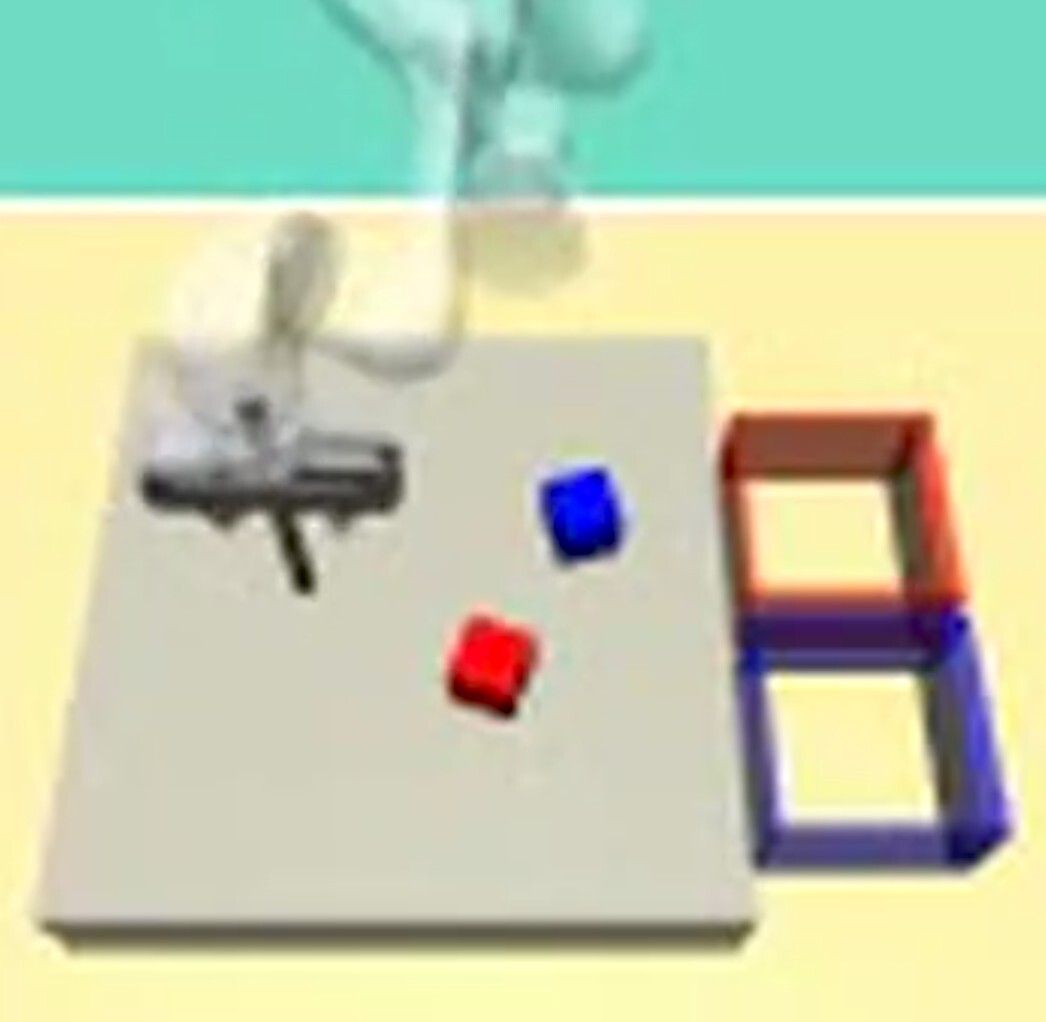}
        \caption{Sort}
        \label{fig:task_sort}
    \end{subfigure}\hspace{10pt}
    \begin{subfigure}[t]{0.11\textwidth}
        \centering
        \includegraphics[width=\linewidth]{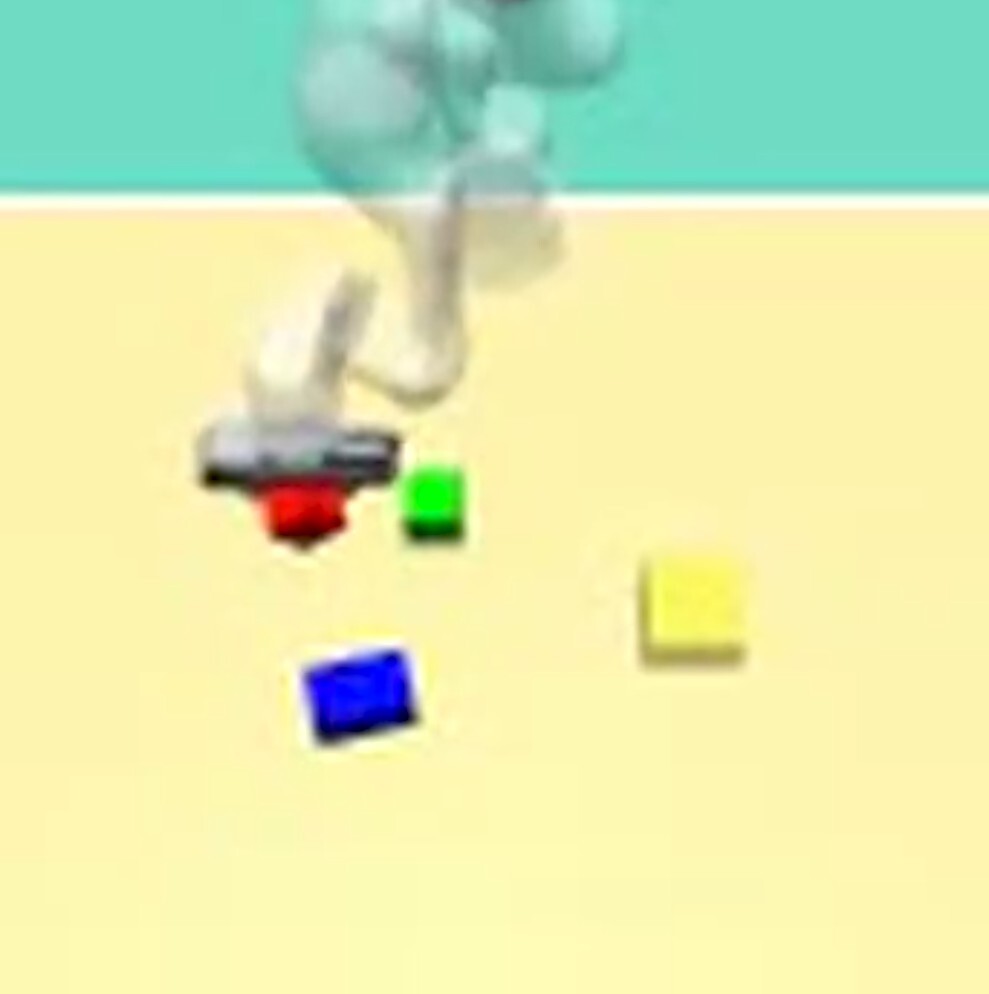}
        \caption{Stack}
        \label{fig:task_stack}
    \end{subfigure}\\
    \begin{subfigure}[t]{0.11\textwidth}
        \centering
        \includegraphics[width=\linewidth]{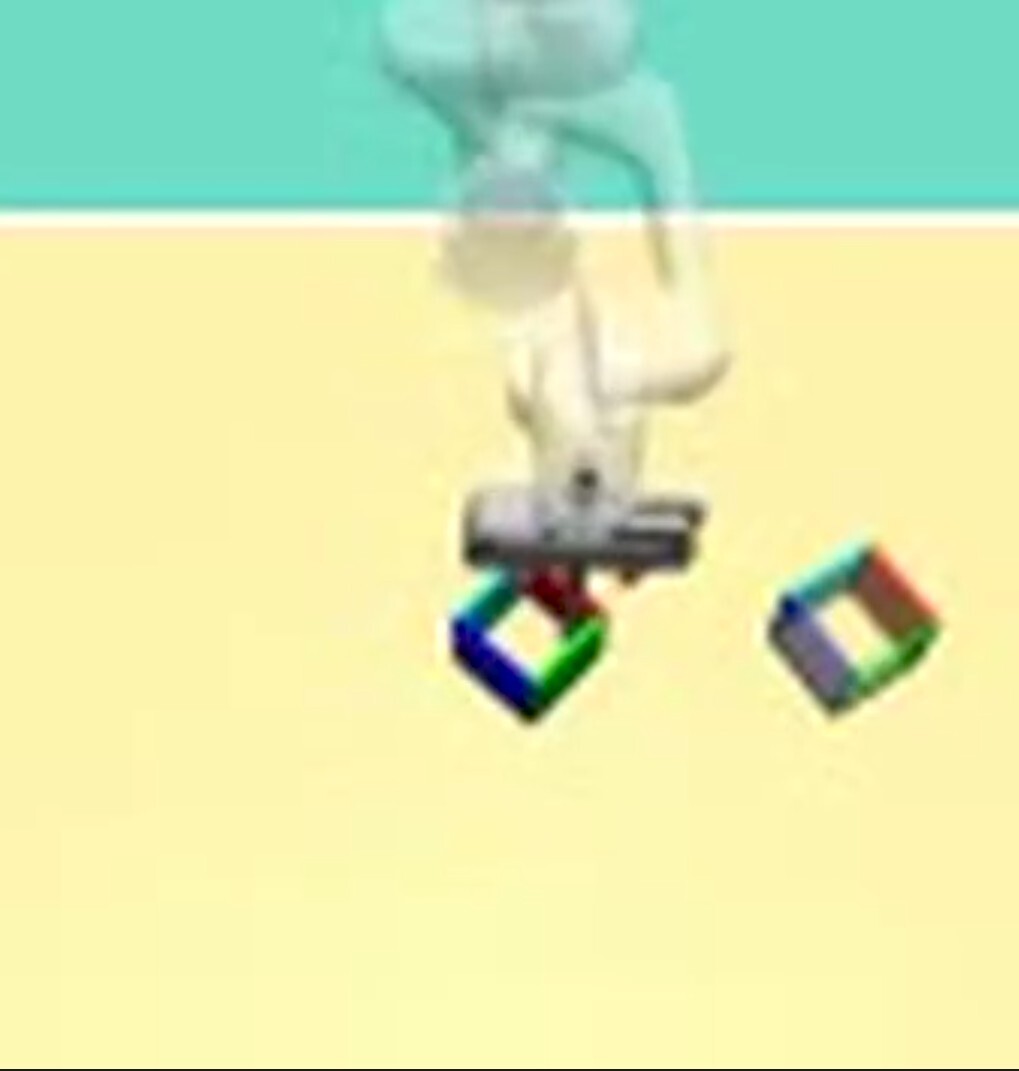}
        \caption{Align}
        \label{fig:task_align}
    \end{subfigure}
    \hspace{10pt}
        \begin{subfigure}[t]{0.11\textwidth}
        \centering
        \includegraphics[width=\linewidth]{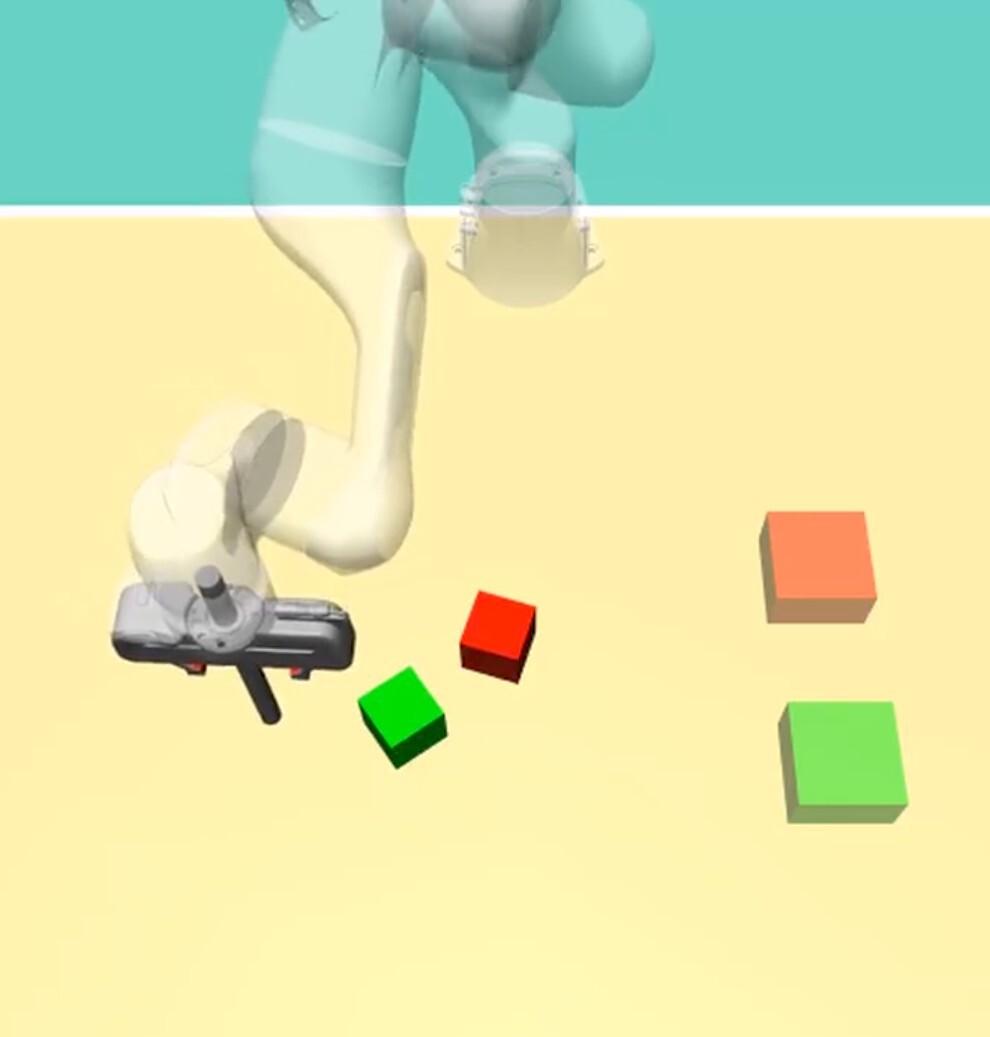}
        \caption{Pushing}
        \label{fig:task_push}
    \end{subfigure}
    \caption{Five long-horizon manipulation tasks adapted from D3IL benchmark. \cite{jia2024towards}}
    \label{fig:task-setup}
    \vspace{-10pt}
\end{figure}


\begin{table*}[!htbp]
    \centering
    \caption{The setup of five  D3IL  Benchmark tasks \cite{jia2024towards} and our preference definition for each task.}
    \label{tab:task_details}
    \resizebox{1.0\textwidth}{!}{%
    \begin{tabular}{l l l l l}
        \toprule
        \textbf{Task Name} & \textbf{Description (how to define success)} & \textbf{Observation space} & \textbf{Pretrained Behavior (Modes)} & \textbf{Mode Preference (Winner)} \\
        \midrule
        \textbf{Avoid} & reach finish line while avoiding obstacles & 4-dim (EE pos, goal) & Pass left or right & \textbf{Left-side} trajectory \\
        \textbf{Pushing} & Push red and green blocks to  matching targets & 10-dim (EE pose, block pos/orient) & Diverse sorting orders & \textbf{Red-to-Red} \\
        \textbf{Sorting} & Sort red and blue blocks into  matching boxes & $(4+3X)$-dim ($X=2$ blocks) & Arbitrary sorting sequences & \textbf{Red-then-Blue} sequence \\
        \textbf{Align} & Push hollow box to target pose & $2 \times 96 \times 96$ RGB + Proprioception & Push from inside or outside & Push from \textbf{Inside} \\
        \textbf{Stacking} & Stack three colored blocks into a tower & $2 \times 96 \times 96$ RGB + Proprioception & 6 stacking permutations & \textbf{Red $\rightarrow$ Blue $\rightarrow$ Green} \\
        \bottomrule
    \end{tabular}%
    }
\end{table*}

\begin{figure*}[!htbp]
\centering
\centerline{\includegraphics[width=0.85\textwidth]{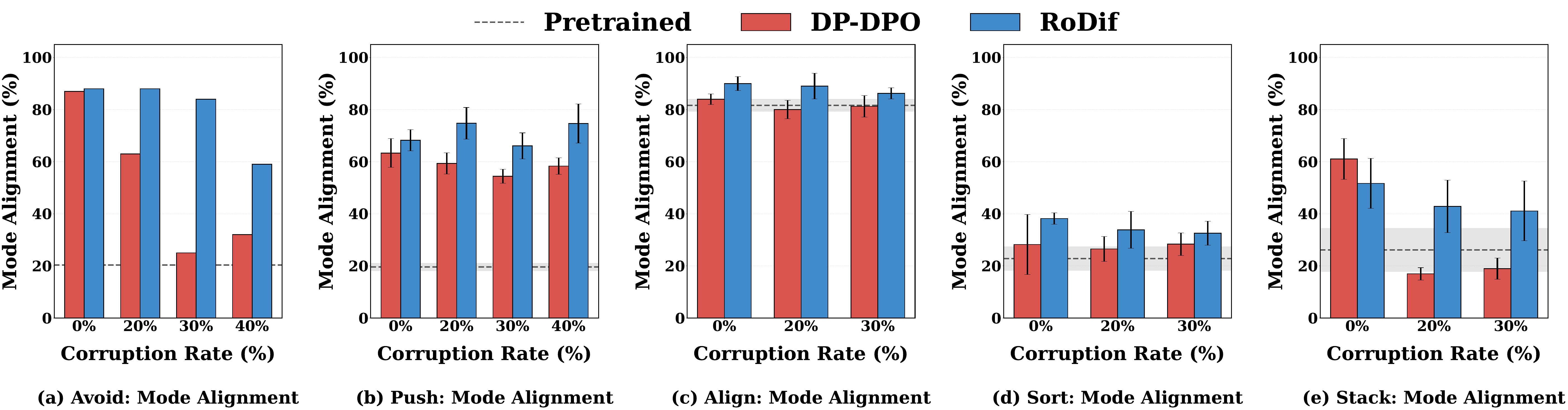}}
\caption{Mode alignment rate of the fine-tuned diffusion policies using RoDiF and DP-DPO under different corruption levels. DP-DPO utilizes the standard DPO loss (\ref{eq:dense_loss}) while keeping all other settings identical. For RoDiF, the hyperparameter settings were adapted based on the task characteristics. For the \textbf{Avoid} and \textbf{Push} tasks, we fixed $\alpha=1$ and $\beta=0.1$, while $\gamma$ was set according to the noise level: $\gamma=0$ for $0\%$ noise, $\gamma=0.3$ for $20\%$ noise, $\gamma=0.4$ for $30\%$ noise, and $\gamma=0.45$ for $40\%$ noise. For the \textbf{Sort}, \textbf{Align} and \textbf{Stack} tasks, we utilized fixed hyperparameters across all noise levels: $\alpha=1$, $\beta=0.005$, and $\gamma=0.45$.}
\label{fig:stitched-noise}
\vspace{-5pt}
\end{figure*}

\vspace{-5pt}
\subsection{Experimental Setup}
\label{subsec:setup}
\vspace{-5pt}
\paragraph{Benchmark Tasks}
We evaluate our method on five long-horizon manipulation tasks adapted from  \emph{D3IL benchmark} \cite{jia2024towards}, as shown in Fig. \ref{fig:task-setup}.
The details of the five benchmark tasks are listed in Table \ref{tab:task_details} and in Appendix \ref{Appendix:task_detail}.

\vspace{-8pt}
\paragraph{Diffusion Policy Architectures} We use three diffusion architectures (details in Appendix \ref{Appendix:policy_arch}): \textbf{(1) State-Based UNet} (\textit{Avoid, Pushing}): Standard Conditional UNet \cite{chi2025diffusion} predicting action chunks  from observation history.  \textbf{(2) Transformer ACT} (\textit{Sorting, Stacking}): Action Chunking Transformer \cite{zhao2023learning}  utilizing ResNet-18 features for visual feature extraction. \textbf{(3) Image-Based MLP} (\textit{Aligning}): Single-action diffusion  using the same ResNet-18 visual backbone as above. The training settings of all policies in each task is shown in Appendix \ref{Appendix:training}.
    


\vspace{-10pt}
\paragraph{Human Preference Label  Generation}
We use a pairing strategy to generate human preference labels. We collected a set of $N_{\text{win}}$ winner (preferred mode) trajectories and $N_{\text{lose}}$ loser trajectories for each task. We construct the preference dataset $\mathcal{D}_{\text{pair}}$ by forming all possible Cartesian products of winners and losers. This results in a total dataset size of $N_{\text{pair}} = N_{\text{win}} \times N_{\text{lose}}$ (See numbers for each tasks in Appendix \ref{Appendix:task_param}). To generate corrupted data labels, we randomly select a portion of pairs from $\mathcal{D}_{\text{pref}}$. For these pairs, the preference labels are permanently inverted. This mimics real-world scenarios where a portion of the data is mislabeled due to annotator error or ambiguity.


\vspace{-10pt}
\paragraph{Evaluation Metrics} We assess the fine-tuned diffusion policies by analyzing the behavioral modes manifested during rollout episodes for each task. We report results using \textbf{(1) Mode Alignment Rate:} The percentage of trajectories that match the preferred mode (last col. of Table \ref{tab:task_details}). \textbf{(2) Success Rate:} Percentage of episodes in which the robot successfully achieves the goal (second col. of Table \ref{tab:task_details}).

\vspace{-5pt}
\subsection{Results}
\label{subsec:performance}
\begin{table*}[!htbp]
    \centering
    \caption{{Overall performance (success rate and mode alignment rate) of the  diffusion policies fine-tuned by RoDiF} under different label corruption rates. $\gamma$, $\beta$ and $\alpha$ values follow the same setting in Fig. \ref{fig:stitched-noise}.}
    \label{tab:main_results}
    \resizebox{\linewidth}{!}{
    \begin{tabular}{l|c|cc|cc|cc|cc}
        \toprule
        & & \multicolumn{2}{c|}{\textbf{PreTrained}} & \multicolumn{2}{c}{\textbf{RoDif (0\% Corruption)}} & \multicolumn{2}{c}{\textbf{RoDif (20\% Corruption)}} & \multicolumn{2}{c}{\textbf{RoDif (30\% Corruption)}}\\
        \textbf{Task} & \textbf{Backbone} & \textbf{Success} & \textbf{Alignment} & \textbf{Success} & \textbf{Alignment} & \textbf{Success} & \textbf{Alignment} & \textbf{Success} & \textbf{Alignment}\\
        \midrule
        Avoid & UNet & 78.00 $\pm$ 0.00\% & 20.33 $\pm$ 0.00\% & \textbf{88.00 $\pm$ 0.00}\% & \textbf{88.00 $\pm$ 0.00}\% & 86.00 $\pm$ 0.00\% & 86.00 $\pm$ 0.00\% & 84.00 $\pm$ 0.00\% & 84.00 $\pm$ 0.00\% \\
        Pushing & UNet & 57.67 $\pm$ 1.25\% & 19.59 $\pm$ 1.45\% & 52.60 $\pm$ 2.06\% & 68.23 $\pm$ 4.06\% & \textbf{56.14 $\pm$ 2.47}\% & \textbf{74.77 $\pm$ 6.06}\% & 54.00 $\pm$ 1.41\% & 66.09 $\pm$ 4.99\% \\
        Sorting & Transformer & 82.00 $\pm$ 4.00\% & 22.82 $\pm$ 4.58\% & 83.85 $\pm$ 5.26\% & \textbf{38.17 $\pm$ 2.22}\% & 79.43 $\pm$ 2.97\% & 33.82 $\pm$ 7.07\% & \textbf{84.00 $\pm$ 6.00}\% & 32.55 $\pm$ 4.61\% \\
        Aligning & MLP & 5.00 $\pm$ 3.54\% & 81.67 $\pm$ 2.36\% & \textbf{12.00 $\pm$ 5.10}\% & \textbf{90.00 $\pm$ 2.67}\% & 8.33 $\pm$ 4.71\% & 89.00 $\pm$ 4.90\% & 8.33 $\pm$ 4.71\% & 86.25 $\pm$ 2.17\% \\
        Stacking & Transformer & 23.03 $\pm$ 2.27\% & 26.12 $\pm$ 8.34\% & \textbf{28.75 $\pm$ 12.44}\% & \textbf{51.66 $\pm$ 9.57}\% & 25.00 $\pm$ 10.80\% & 42.86 $\pm$ 10.10\% & 27.00 $\pm$ 6.00\% & 41.09 $\pm$ 11.47\% \\
        \bottomrule
    \end{tabular}
    }
\end{table*}
\paragraph{Overall Performance}

The overall results are reported in Table~\ref{tab:main_results} and Fig.~\ref{fig:stitched-noise}. In Table~\ref{tab:main_results}, we demonstrate the effectiveness of RoDiF for  fine-tuning  diffusion policy under different levels of preference label corruption. It shows that RoDif consistently steers the policy toward the user-specified modes, demonstrating robust performance even in challenging scenarios. Specifically, for four tasks \textbf{Avoid, Pushing, Sorting, and Stacking}, the target behavior mode corresponds to a \textbf{non-dominant mode} of the pretrained policy (baseline alignment $<27\%$). In these cases, the method must override the policy's strong natural tendency to execute alternative behaviors. Additional result and visualization is shown in Appendix \ref{Appendix:Additional}.

Table \ref{tab:main_results} also show that the success rates of the fine-tuned policies have slightly increased compared to the pretrained policies. This indicates that while steering the diffusion policy toward preferred mode,  RoDiF does not compromise the overall task performance, \textbf{maintaining task stability throughout the fine-tuning process}. Furthermore, Table~\ref{tab:main_results} demonstrates our method is effective across different diffusion backbones, validating its \textbf{architecture agnosticism}.

In Fig.~\ref{fig:stitched-noise}, we compare the performance of RoDiF against the baseline DP-DPO, which utilizes the standard DPO loss (\ref{eq:dense_loss}) while keeping all other settings identical. The results demonstrate that under 0\% label corruption, both methods fine-tune the policy with comparable performance. However, as the corruption rate increases, RoDiF demonstrates significantly greater robustness, maintaining high fine-tuning performance where the baseline DPO degrades.



\begin{figure}[!htbp]
\vspace{-5pt}
    \centering
    \includegraphics[width=1\linewidth]{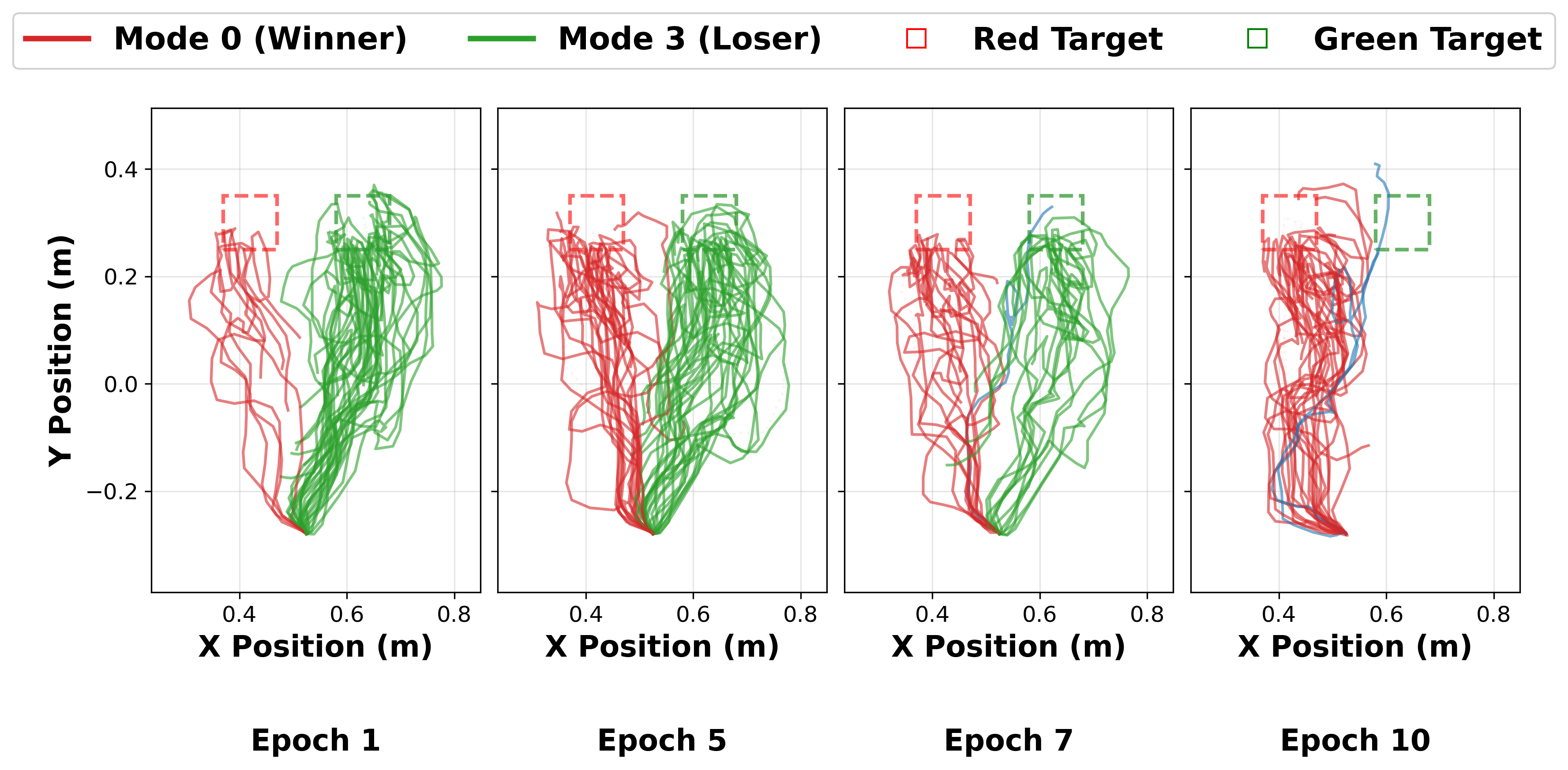}
    \vspace{-10pt}
    \caption{Visualizing behavior (policy rollouts) mode during fine-tuning in the pushing task.}
    \label{fig:progress}
    \vspace{-15pt}
\end{figure}


\paragraph{Qualitative Results.} Fig.~\ref{fig:progress} visualizes the evolution of the policy mode during fine-tuning for the \textit{Pushing} task, where the objective is to steer towards a non-dominant mode. At Epoch 1, the rollout trajectories are heavily clustered around the dominant mode (visualized in green), reflecting the bias of the pretrained backbone. However, as training progresses, the distribution shifts significantly, with the target non-dominant trajectories (visualized in red) becoming prevalent by the final epoch.


\begin{wrapfigure}[10]{r}{0.45\columnwidth}
    \centering
    \vspace{-20pt} 
    \includegraphics[width=\linewidth]{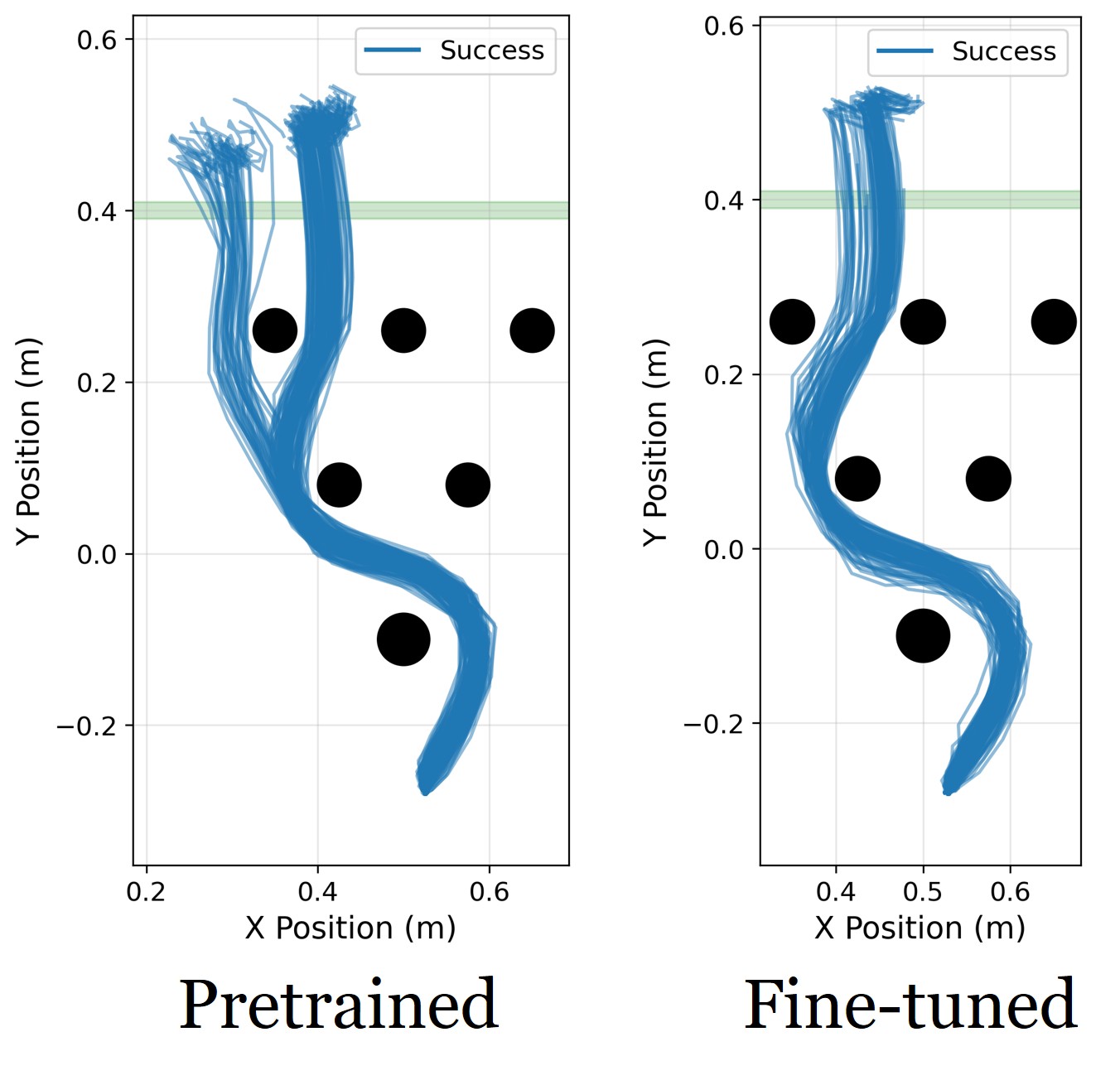}
    \vspace{-10pt}
    \caption{Mode visualization for avoid task}
    \label{fig:two_tasks_forced}
    \vspace{-10pt} 
\end{wrapfigure}
We also validate RoDif's capability to reinforce \textbf{dominant modes}. For the \textit{Avoid} task, when steered towards the already-preferred trajectory, the policy achieved a mode alignment of $84\%$ over 100 evaluation rollouts as shown in  Fig. \ref{fig:two_tasks_forced}, confirming it can sharpen existing behaviors as well.

Finally, we highlight the \textbf{sample efficiency} of our approach; for the \textit{Pushing} task, RoDiF use as few as 10 winner-loser pairs was sufficient to achieve a mode alignment rate of $76\%$ (see the visualization in Appendix \ref{Appendix:Additional}), significantly reducing the human preference annotation burden.


\begin{figure}[!htbp]
    \centering
    \begin{subfigure}[t]{0.49\linewidth}
        \centering
        \includegraphics[width=\linewidth]{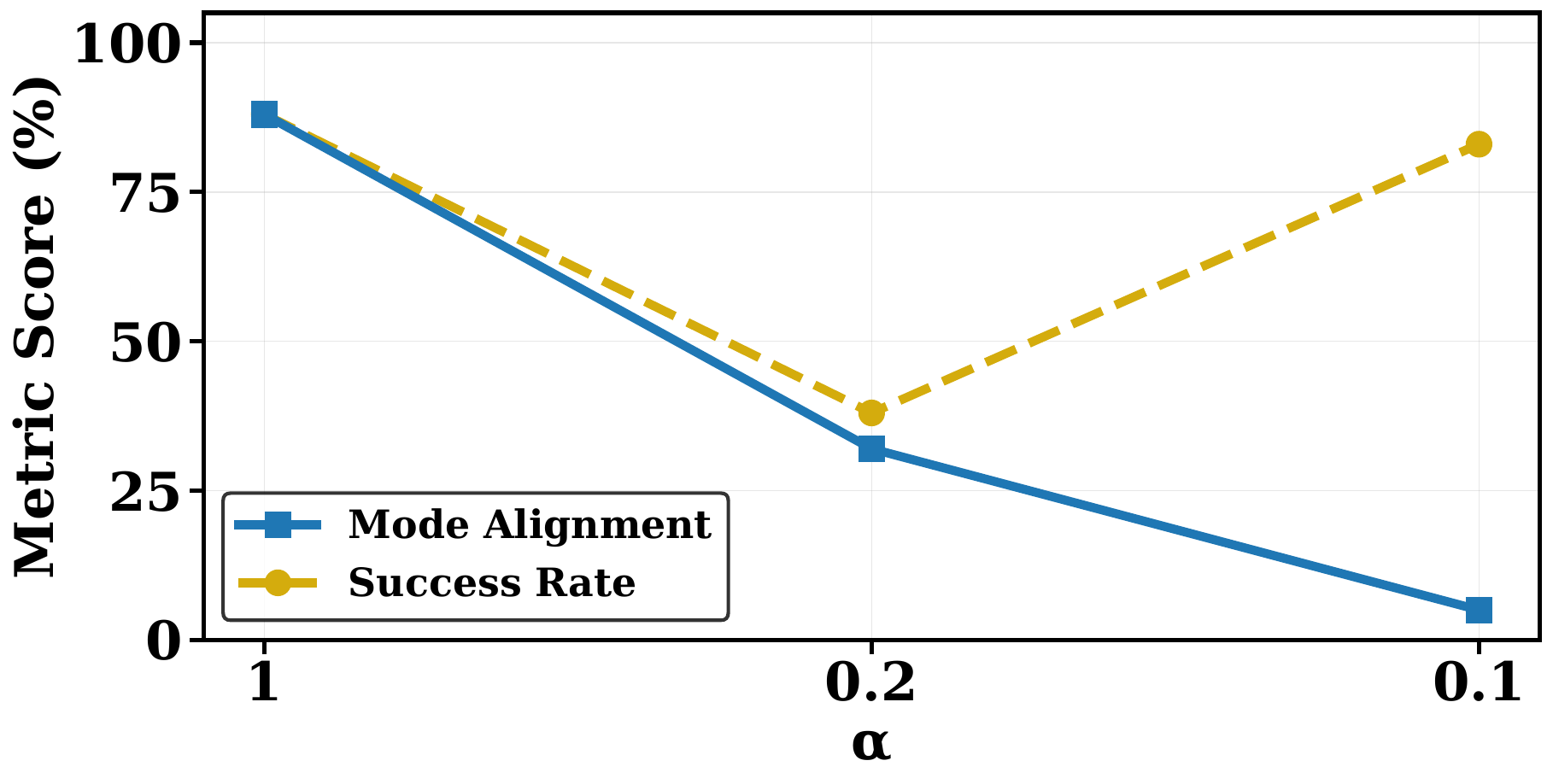}
        \caption{Effect of $\alpha$}
        \label{fig:ablation_alpha}
    \end{subfigure}
    \hfill 
    \begin{subfigure}[t]{0.49\linewidth}
        \centering
        \includegraphics[width=\linewidth]{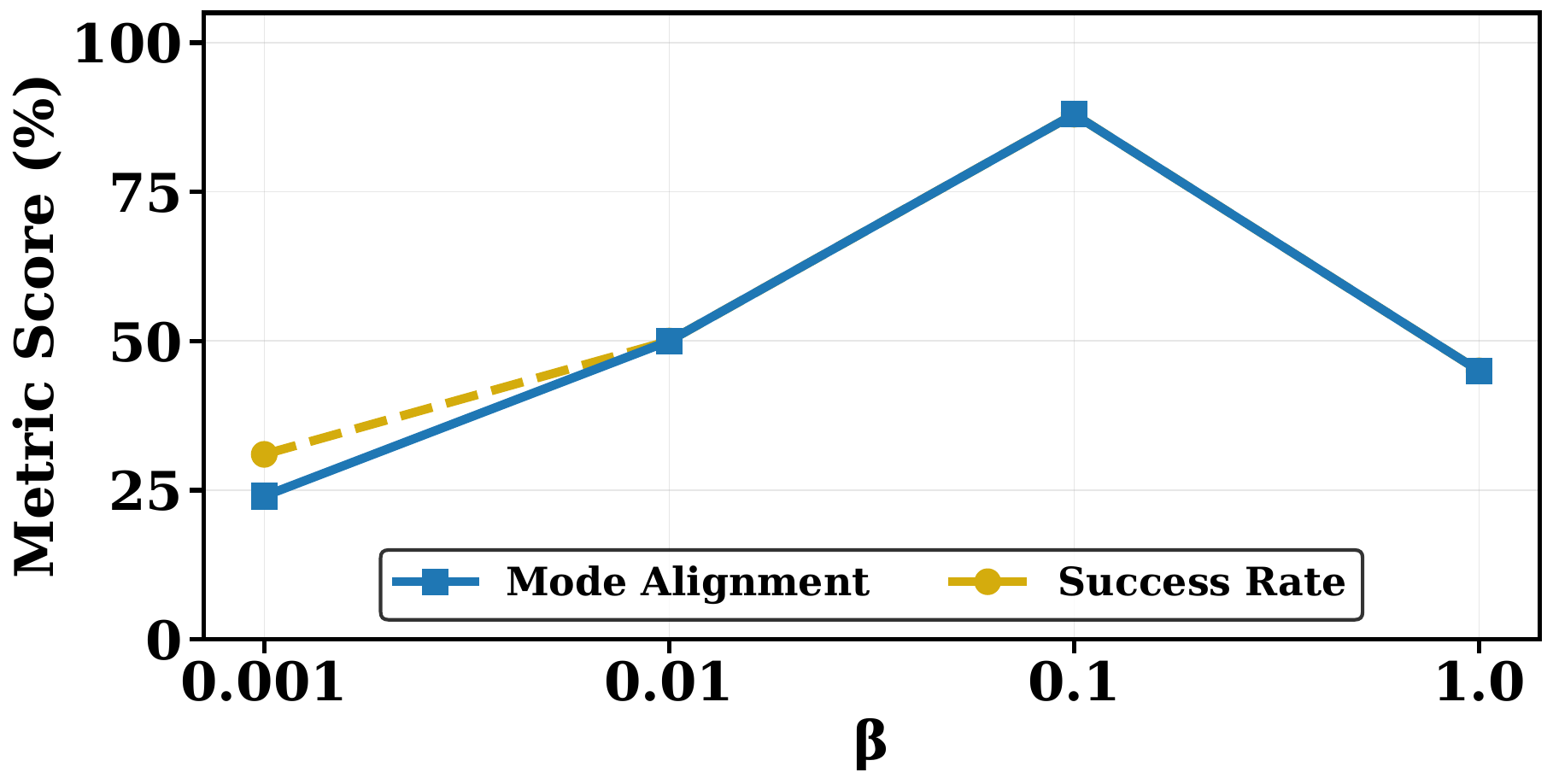}
        \caption{Effect of $\beta$}
        \label{fig:ablation_beta}
    \end{subfigure}
    \hfill
    \begin{subfigure}[t]{0.49\linewidth}
        \centering
        \includegraphics[width=\linewidth]{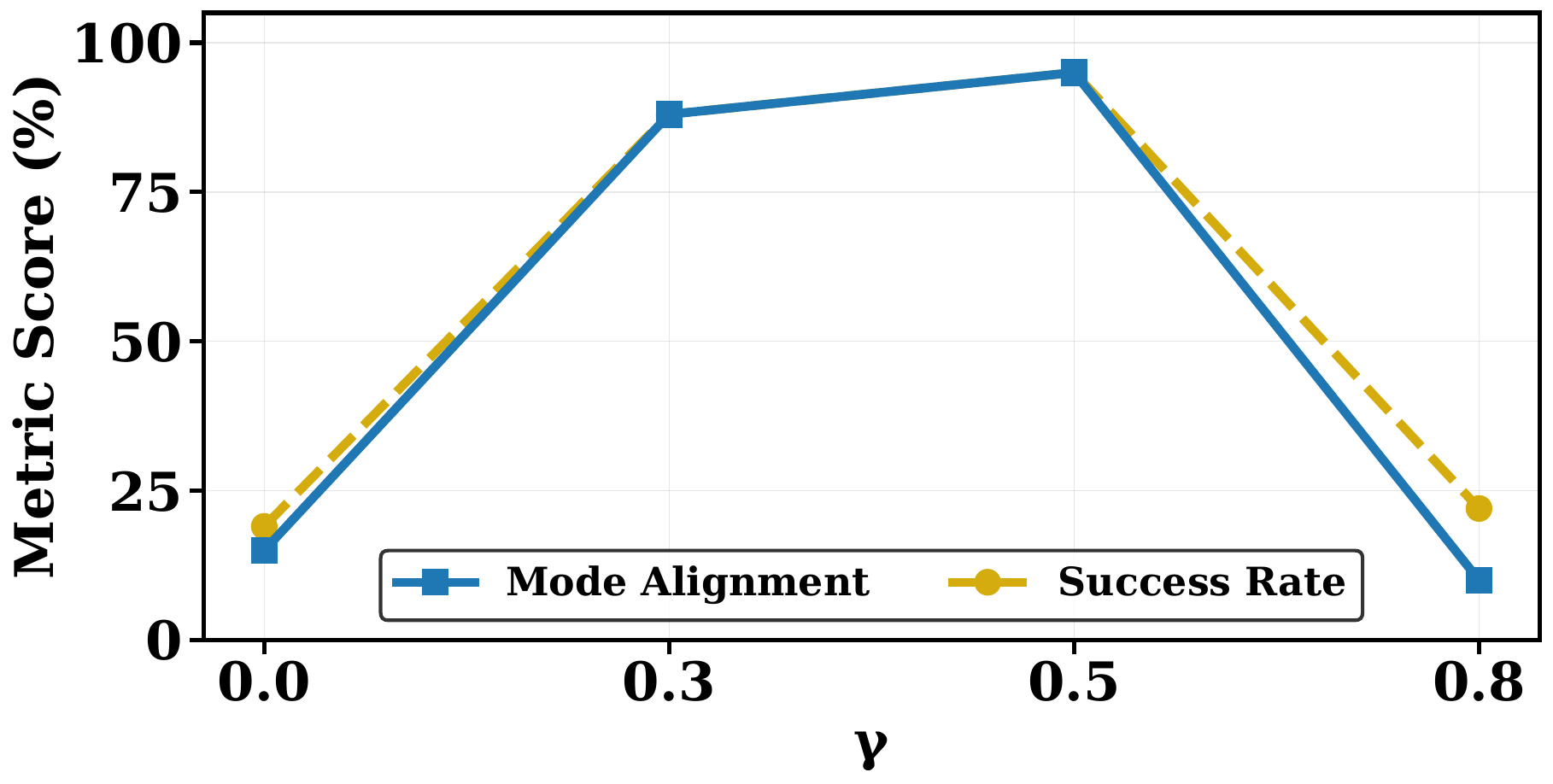}
        \caption{Effect of $\gamma$}
        \label{fig:ablation_gamma}
    \end{subfigure}
    \caption{Ablation studies on the sensitivity of the success rate and mode alignment to variations in $\beta$, $\alpha$, and $\gamma$.}
    \vspace{-10pt}
    \label{fig:all_ablations}
\end{figure}

\subsection{Baseline Comparisons} 
Table~\ref{tab:robustness_baselines} evaluates RoDif against baselines under varying noise. Margin-based \textbf{SimPO} \cite{meng2024simposimplepreferenceoptimization} collapses under 20\% noise (43\%$\to$10\% success), confirming that hard margins overfit to systematic corruption. \textbf{CPO} \cite{xu2024contrastivepreferenceoptimizationpushing} remains stable ($\sim$77\% success) but fails to steer effectively (low alignment), likely constrained by its behavior cloning term. \textbf{IPO} \cite{azar2023generaltheoreticalparadigmunderstand} proves inconsistent. In contrast, \textbf{RoDif} maintains robustness across all levels ($>$84\% success). At 20\% noise, RoDif achieves \textbf{86\% success}, outperforming the strongest baseline by +9 \% and demonstrating that it effectively isolates the steering signal from systematic label noise.

\begin{table}[h]
    \centering
    \caption{\textbf{Robustness Comparison.} Performance comparison of RoDif against baseline preference optimization methods }
    \label{tab:robustness_baselines}
    \resizebox{\linewidth}{!}{
    \begin{tabular}{l|ccc|ccc}
        \toprule
        & \multicolumn{3}{c|}{\textbf{Success Rate (\%)}} & \multicolumn{3}{c}{\textbf{Mode Alignment (\%)}} \\
        \textbf{Method} & \textbf{0\% } & \textbf{20\% corr.} & \textbf{30\% corr.} & \textbf{0\% corr.} & \textbf{20\% corr.} & \textbf{30\% corr.} \\
        \midrule
        \textbf{RoDif} & \textbf{88.00} & \textbf{86.00} & \textbf{84.00} & \textbf{88.00} & \textbf{86.00} & \textbf{84.00} \\
        CPO & 76.00 & 77.00 & 78.00 & 69.00 & 70.00 & 71.00 \\
        IPO & 65.00 & 49.00 & 69.00 & 63.00 & 47.00 & 58.00 \\
        SimPO & 43.00 & 10.00 & 27.00 & 43.00 & 9.00 & 21.00 \\
        \bottomrule
    \end{tabular}
    }
    \vspace{-10pt}
\end{table}


\subsection{Ablation Study}
\label{sec:ablation}

For the RoDiF method, we conducted a sensitivity analysis on the \textit{Avoid} task under a fixed label corruption rate of $20\%$. We examined the impact of three  hyperparameters: temperature $\alpha$,  KL weight $\beta$, and  conservativeness level  $\gamma$.

\textbf{Temperature ($\alpha$):} 
 Fig.~\ref{fig:ablation_alpha} shows that a softer (high temperature) ($\alpha{=}1$) significantly outperforms a sharp one ($\alpha{=}0.1$). Low temperature $\alpha$ makes the voting function (\ref{eq:single_vote}) near a binary step. This leads to \textit{gradient vanish} for learning signals. In contrast, $\alpha=1$ maintains necessary gradient flow, facilitating continuous optimization.

\textbf{KL-Constraint ($\beta$):} 
Fig.~\ref{fig:ablation_beta} reveals a trade-off governed by $\beta$. Large values ($\beta {=} 1$) induce gradient saturation, causing the policy to become trapped in the pretrained mode. Conversely, very small $\beta$ values lead to overly aggressive updates that degrade trajectory smoothness and leads to collision. Empirically, $\beta = 0.1$ provides the best balance for effective mode adaptation.

\textbf{Conservativeness level ($\gamma$):} 
 $\gamma$ is a conservative estimate of the actual corruption rate, which can be considered as the decision boundary for noise rejection. Given the actual corruption rate  $20\%$, Fig.~\ref{fig:ablation_gamma} shows that $\gamma$ should match or slightly surpass the actual corruption rate for RoDiF to work. This corresponds to the theory in Lemma \ref{lemma:robust}. At large levels like $\gamma$ = 0.8, the RoDiF becomes overly conservative, discarding valid but imperfect trajectories.

\vspace{-5pt}
\section{Conclusion}
\vspace{-5pt}
In this work, we propose RoDiF, a novel framework for robust direct fine-tuning of diffusion policies under corrupted human feedback. RoDiF introduces a unified MDP that jointly models the diffusion denoising process and environment dynamics, enabling the application of Direct Policy Optimization (DPO) to diffusion policies. To address noisy preference feedback, RoDiF reinterprets the DPO objective through a geometric hypothesis-cutting perspective and adopts a conservative cutting strategy, resulting in robust policy fine-tuning. Across diverse robotic tasks, RoDiF matches the performance of state-of-the-art preference-based fine-tuning methods under clean data and significantly outperforms them when a large fraction of preferences is corrupted.

\section*{Impact Statement}
This work advances the safety and reliability of preference-based robot learning by introducing a robust fine-tuning framework for diffusion policies under noisy human feedback. By explicitly mitigating the effects of erroneous or inconsistent preferences, the proposed approach reduces the risk of brittle or unstable policy updates, which is critical for deploying robotic systems in real-world, safety-sensitive environments. The ethical and societal implications of this work align with those broadly recognized in autonomous robotics and learning-from-human-feedback research. We view this contribution as a step toward safer and more dependable robotic learning systems, emphasizing robustness as a foundational requirement for responsible deployment.


\bibliography{example_paper}
\bibliographystyle{icml2026}

\newpage
\appendix
\onecolumn
\section{Proof of Lemmas}
\subsection{Proof of Lemma \ref{lemma:cuts}}\label{appendix:proof1}
\begin{proof}
    \eqref{eq:perfect_indicator} reaches the maximum value $N$ if and only if $\theta 
    \in \Theta \cap \bigcap_{i=1}^N \mathcal{C}_i \subseteq \mathcal{C}_j$, since all of the votes $V_{\mathcal{C}_i}(\theta), i=1,\dots,N$ is 1. Thus the solution set to \eqref{eq:perfect_indicator} is $\Theta \cap \bigcap_{i=1}^N \mathcal{C}_i \subseteq \mathcal{C}_j$.
    In noise-free case, combining the definition of $\theta^H$ and the cuts $\mathcal{C}_i$ in \eqref{eq:cut_i}. It is clear that $\theta^H \in \mathcal{C}_i,\ i=1,\dots,N$. Combining with $\theta^H \in \Theta$, we have \eqref{eq:perfect_indicator}.

    If the $j$-th preference pair is falsely labelled, the $j$-th cut $\mathcal{C}_j$ is defined as:
    \begin{equation}
        \mathcal{C}_j = \left\{\theta\in\Theta\,|\,\Delta Q_{\theta} \big(\mathbf{a}^{+}_{j,w}|\mathbf{s}_j^+,\mathbf{a}^{+}_{j,l}|\mathbf{s}_j^+\big) \geq 0 \right\}.
    \end{equation}
    By the definition of $\theta^H$, $\Delta Q_{\theta^H} \big(\mathbf{a}^{+}_{j,w}|\mathbf{s}_j^+,\mathbf{a}^{+}_{j,l}|\mathbf{s}_j^+\big) = \Delta Q\big(\mathbf{a}^{+}_{j,w}|\mathbf{s}_j^+,\mathbf{a}^{+}_{j,l}|\mathbf{s}_j^+\big) < 0$. We have $\theta^H \notin \mathcal{C}_j$ and therefore $\theta^H \notin \Theta \cap \bigcap_{i=1}^N \mathcal{C}_i$. In other words, $\theta^H$ cannot be found in the above solution set. 
\end{proof}

\subsection{Proof of Lemma \ref{lemma:robust}} \label{appendix:proof2}
\begin{proof}
    With the definition of $\gamma$, there are at most $\lceil \gamma N \rceil$ incorrect labels and at least $\lfloor(1{-}\gamma) N\rfloor$ correct labels in the  dataset.  As a result, for $\theta^H$, there will be at least $\lfloor(1-\gamma) N\rfloor$ votes, i.e., $\sum_{i=1}^{N}V_{\mathcal{C}_i}(\theta^H) \geq \lfloor(1{-}\gamma)N\rfloor$ and $\mathbf{H}\left(\sum_{i=1}^{N}V_{\mathcal{C}_i}(\theta^H) {-} \lfloor(1{-}\gamma)N\rfloor {+} 0.5 \right)=1$, which means $\theta^H$ falls in the solution set in \eqref{eq:robust_dpo_loss}.
\end{proof}

\section{Experiment Setup Details}

\subsection{Task Details}
\label{Appendix:task_detail}
\textbf{1) Avoid (State-Based)} \\
The robot must navigate to a green finish line while avoiding six static obstacles.
\begin{itemize}
    \item \textbf{Observation:} 4-dimensional state (end-effector position and desired goal).
    \item \textbf{Modes \& Preference:} The pretrained policy exhibits multimodal behavior, passing obstacles from either the \textit{left} or \textit{right}. 
\end{itemize}

\textbf{2) Pushing (State-Based)} \\
The robot must push two blocks (red and green) into their corresponding target zones.
\begin{itemize}
    \item \textbf{Observation:} 10-dimensional state (end-effector pose, block positions, block orientations).
    \item \textbf{Modes \& Preference:} The task allows for diverse sorting order: \textbf{Green Block to Green Position or Red Block to Red Position} .
\end{itemize}

\textbf{3) Sorting (State-Based)} \\
The robot must sort one red block and one blue block into their matching target boxes.
\begin{itemize}
    \item \textbf{Observation:} $(4+3X)$-dimensional state where $X=2$ blocks.
    \item \textbf{Preference:} We enforce a temporal preference, defining the sorting sequence: \textbf{Red block then Blue block} or \textbf{Blue Block then Red Block}.
\end{itemize}

\textbf{4) Reorientation (Image-Based)} \\
The robot must push a hollow box to a predefined target position and orientation.
\begin{itemize}
    \item \textbf{Observation:}  The policy conditions on two $96 \times 96$ RGB images (Front View + In-Hand View) along with the robot's proprioceptive state.
    \item \textbf{Preference:} The box can be pushed from the \textit{inside} or \textit{outside}.
\end{itemize}

\textbf{5) Stacking (Image-Based)} \\
The robot must stack three colored blocks (Red, Green, Blue) into a tower.
\begin{itemize}
    \item \textbf{Observation:}  Conditions on two $96 \times 96$ RGB images (Front View + In-Hand View) and proprioceptive state.
    \item \textbf{Preference:} With six possible stacking permutations, we select the order \textbf{Red $\rightarrow$ Blue $\rightarrow$ Green} and \textbf{Green $\rightarrow$ Blue $\rightarrow$ Red} for preference labelling.
\end{itemize}

\subsection{Policy Architectures}
\label{Appendix:policy_arch}
\begin{itemize}[leftmargin=*, nosep]
    \item \emph{State-Based UNet:} For \textit{Avoid} and \textit{Pushing}, we employ the standard Conditional UNet architecture \cite{chi2025diffusion}. The model takes a history of observations ($T_{\text{obs}}=2$) and predicts  future action chunk of length ($T_{\text{pred}}=16$).
    
    \item \emph{Transformer ACT:} For \textit{Sorting}  and \textit{Stacking}, we use the Action Chunking Transformer (ACT)  architecture \cite{zhao2023learning}. This model processes the observation history ($T_{\text{obs}}=5$) and generate action chunk ($T_{\text{pred}}=16$). For image observation, visual features  from two camera views  are extracted with ResNet-18 encoders \cite{he2016deep} and are concatenated with  robot's proprioceptive state.

    \item \emph{Image-Based MLP:} For \textit{Aligning}, we employ a Single-Action diffusion architecture. Visual features from two camera views  are extracted with ResNet-18 encoders and concatenated with the robot's proprioceptive state. The model takes as input the current observation  ($T_{\text{obs}}=1$) to predict a single next action ($T_{\text{pred}}=1$).
\end{itemize}
\subsection{Training settings}
\label{Appendix:training}
For pretraining of policies, we used the same parameters, given in the D3IL benchmark repository.

\begin{table}[h]
    \centering
    \caption{\textbf{Training Hyperparameters } Summary of key parameters used across different tasks.}
    \label{tab:hyperparameters}
    \resizebox{0.95\linewidth}{!}{
    \begin{tabular}{l|c|c|c|c|c}
        \toprule
        \textbf{Parameter} & \textbf{Avoid} & \textbf{Pushing} & \textbf{Sorting} & \textbf{Aligning} & \textbf{Stacking} \\
        \midrule
        
        Denoising Steps ($K$) & 20 & 20 & 16 & 4 & 16 \\
        Optimizer & Adam & Adam & AdamW & AdamW & AdamW \\
        Learning Rate & $3 \times 10^{-5}$ & $3 \times 10^{-5}$ & $1 \times 10^{-5}$ & $1 \times 10^{-5}$ & $1 \times 10^{-5}$ \\
        Batch Size & 64 & 64 & 64 & 32 & 32 \\
        KL Coefficient ($\beta$) & 0.1 & 0.1 & 0.005 & 0.005 & 0.005 \\
        Robustness Temp ($\alpha$) & 1.0 & 1.0 & 1.0 & 1.0 & 1.0 \\

        \bottomrule
    \end{tabular}
    }
\end{table}

\subsection{Task Specific Parameters}
\label{Appendix:task_param}
The parameters $N_{\text{win}},N_{\text{lose}}$ and number of trials in evaluation is shown in Table \ref{tbl:task_specific}.

\begin{table}[!htbp]
    \centering
    \caption{\textbf{Dataset and Evaluation Statistics.} Number of winner and loser trajectories collected for preference finetuning, along with the number of evaluation rollouts used to compute success and alignment rates.}
    \label{tbl:task_specific}
    \label{tab:data_stats}
    \resizebox{0.5\linewidth}{!}{
    \begin{tabular}{l|c|c|c}
        \toprule
        \textbf{Task} & \textbf{\# Winners} $N_{\text{win}}$ & \textbf{\# Losers} $N_{\text{lose}}$ & \textbf{\# Eval. Rollouts} \\
        \midrule
        Avoid & 20 & 20 & 100 \\
        Pushing & 30 & 30 & 100 \\
        Sorting & 50 & 50 & 50 \\
        Aligning & 40 & 40 & 20 \\
        Stacking & 31 & 65 & 20 \\
        \bottomrule
    \end{tabular}
    }
\end{table}

\section{Additional Results}
\label{Appendix:Additional}
We conduct additional experiments by extending the noise level up to 0.95 on the pushing task. Representative evaluation trajectories are shown in Fig.~\ref{fig:stitched-noise_noise}. The results indicate that RoDiF maintains strong performance even when the corruption rate approaches 0.45. When the noise level exceeds 0.5, the aligned mode flips, which is expected since the dominant preference label is reversed.

We additionally visualize the robot end-effector trajectories for the avoidance task, comparing the performance of DP-DPO and RoDiF under preference noise. The results, shown in Fig.~\ref{fig:noise_levels_DPO}, clearly demonstrate that RoDiF better preserves the preferred behavior mode under high noise levels.

The visualization of the policy rollouts with 10 preference pairs is shown in Fig. \ref{fig:robust10}.

\begin{figure*}[!htbp]
\begin{center}
\centerline{\includegraphics[width=\textwidth]{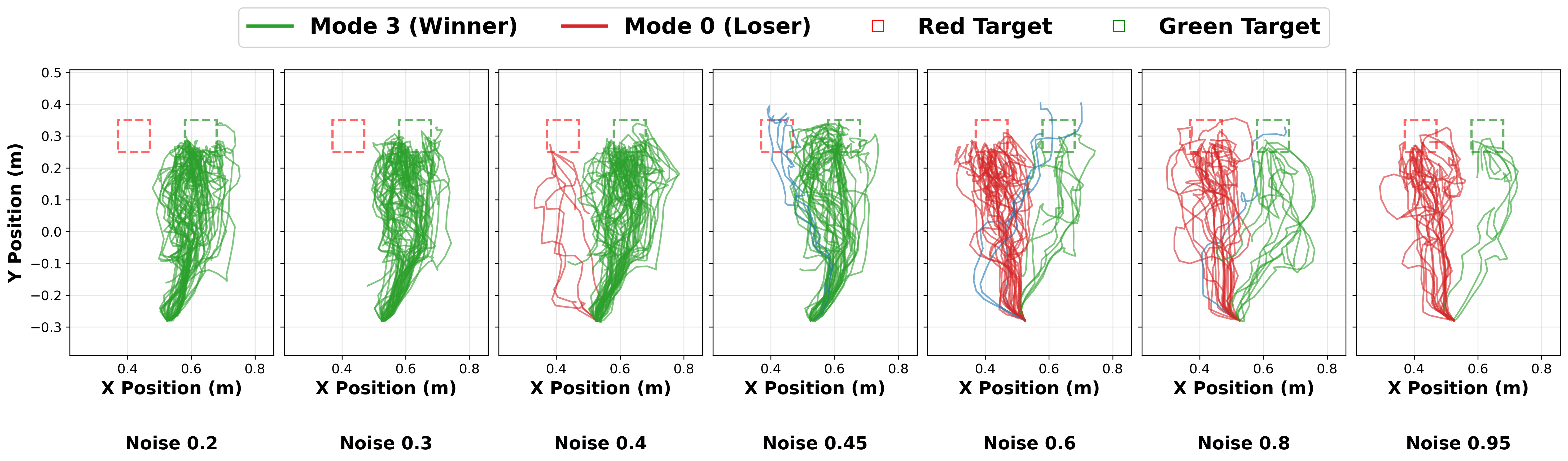}}
\caption{Mode visualization of the fine-tuned policies under different corruption rates. The regularization parameter $\gamma$ was set as follows: $\gamma=0.25$ for $20\%$ noise, $\gamma=0.35$ for $30\%$ noise, $\gamma=0.45$ for $40\%$ noise, $\gamma=0.5$ for $45\%$ noise, $\gamma=0.65$ for $60\%$ noise, $\gamma=0.85$ for $80\%$ noise, and $\gamma=0.95$ for $95\%$ noise.}
\label{fig:stitched-noise_noise}
\end{center}
\end{figure*}


\begin{figure*}[!htbp]
    \centering
    \begin{subfigure}[t]{0.25\textwidth}
        \centering
        \includegraphics[height=5cm, keepaspectratio]{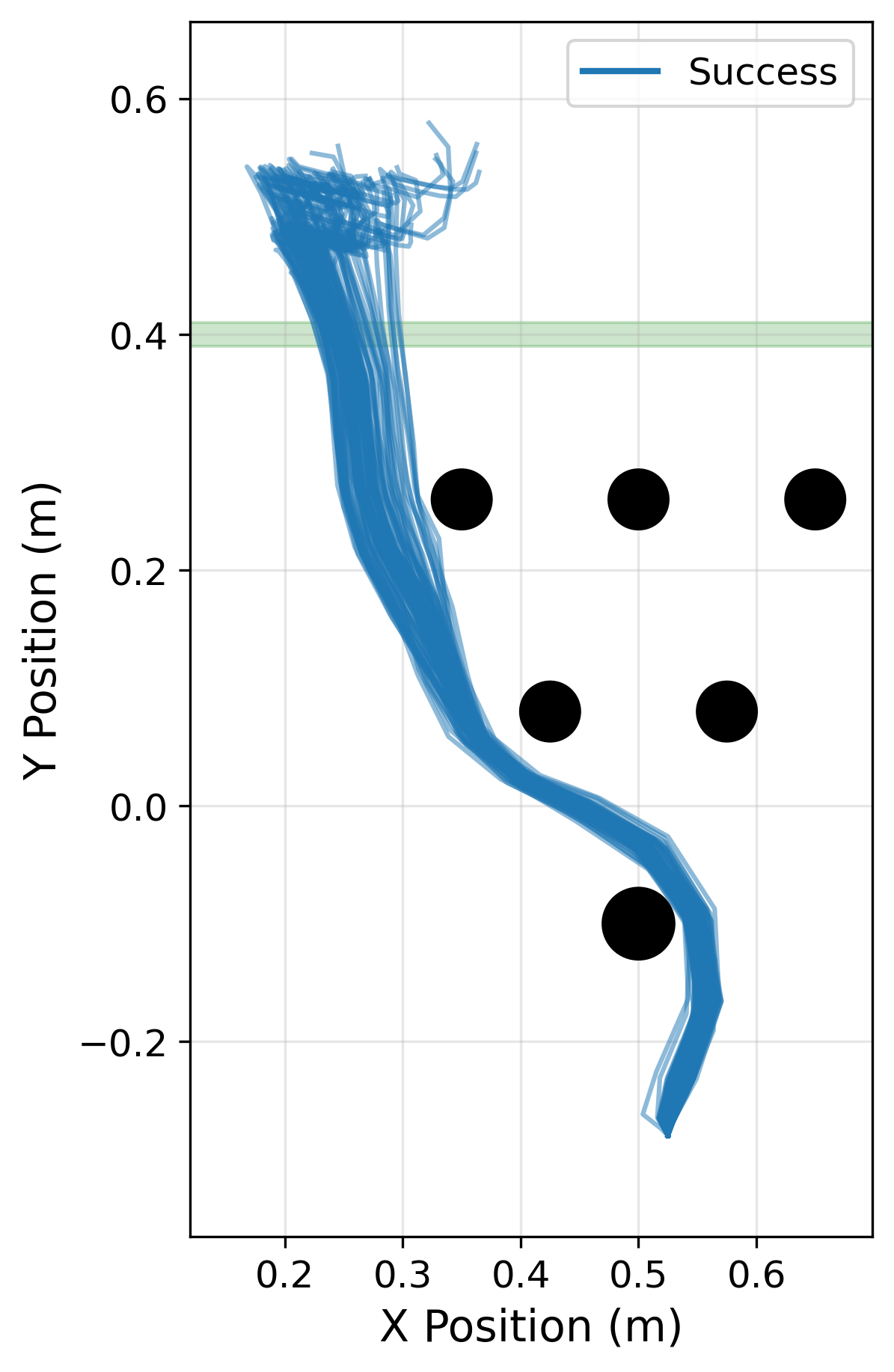}
        \caption{RoDif 0\%}
        
    \end{subfigure}\hfill
    \begin{subfigure}[t]{0.25\textwidth}
        \centering
        \includegraphics[height=5cm, keepaspectratio]{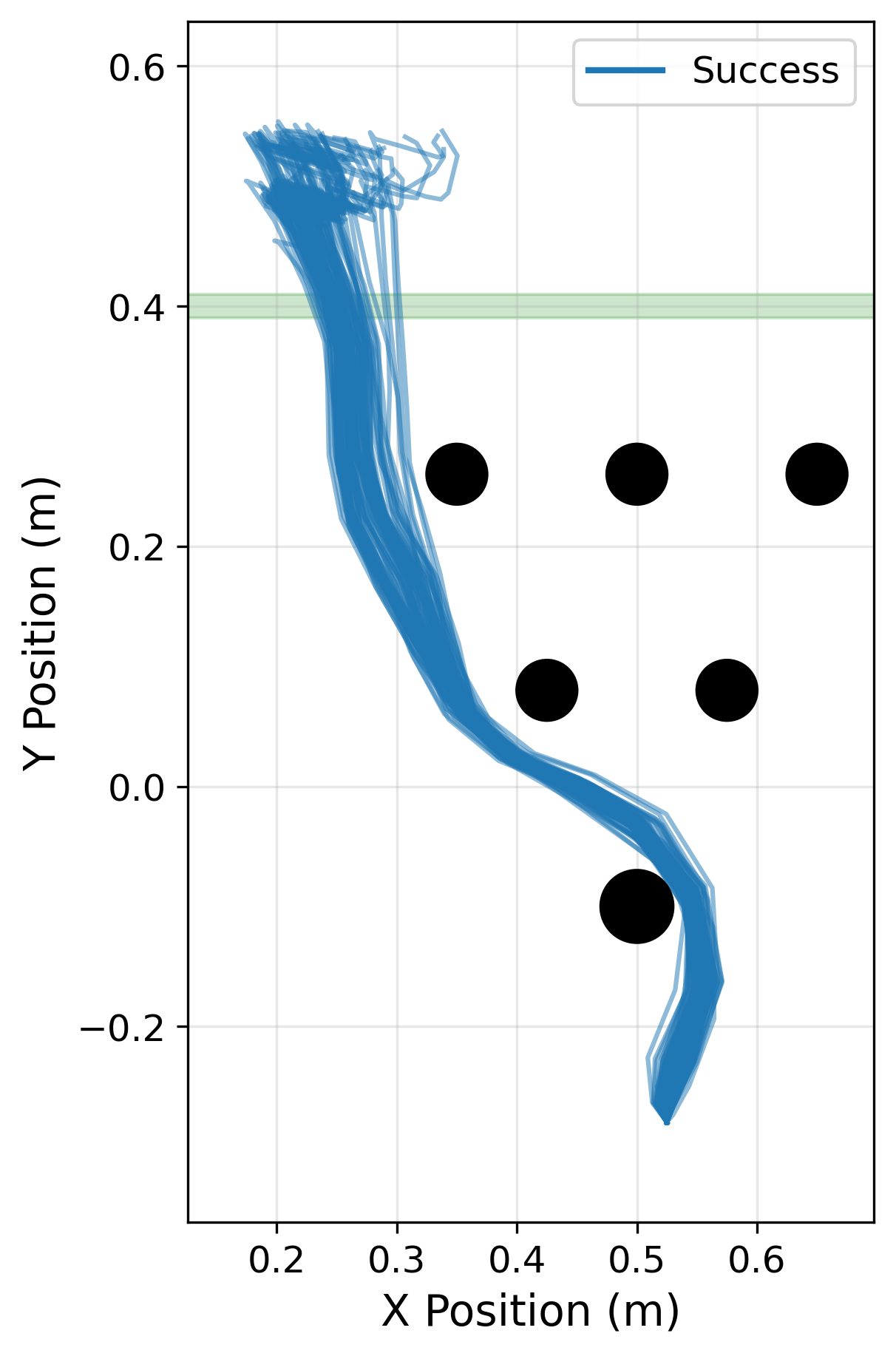}
        \caption{RoDif 20\%}
        
    \end{subfigure}\hfill
    \begin{subfigure}[t]{0.25\textwidth}
        \centering
        \includegraphics[height=5cm, keepaspectratio]{Final_fig/robys0.2.png}
        \caption{RoDif 30\%}
       
    \end{subfigure}\hfill
    \begin{subfigure}[t]{0.25\textwidth}
        \centering
        \includegraphics[height=5cm, keepaspectratio]{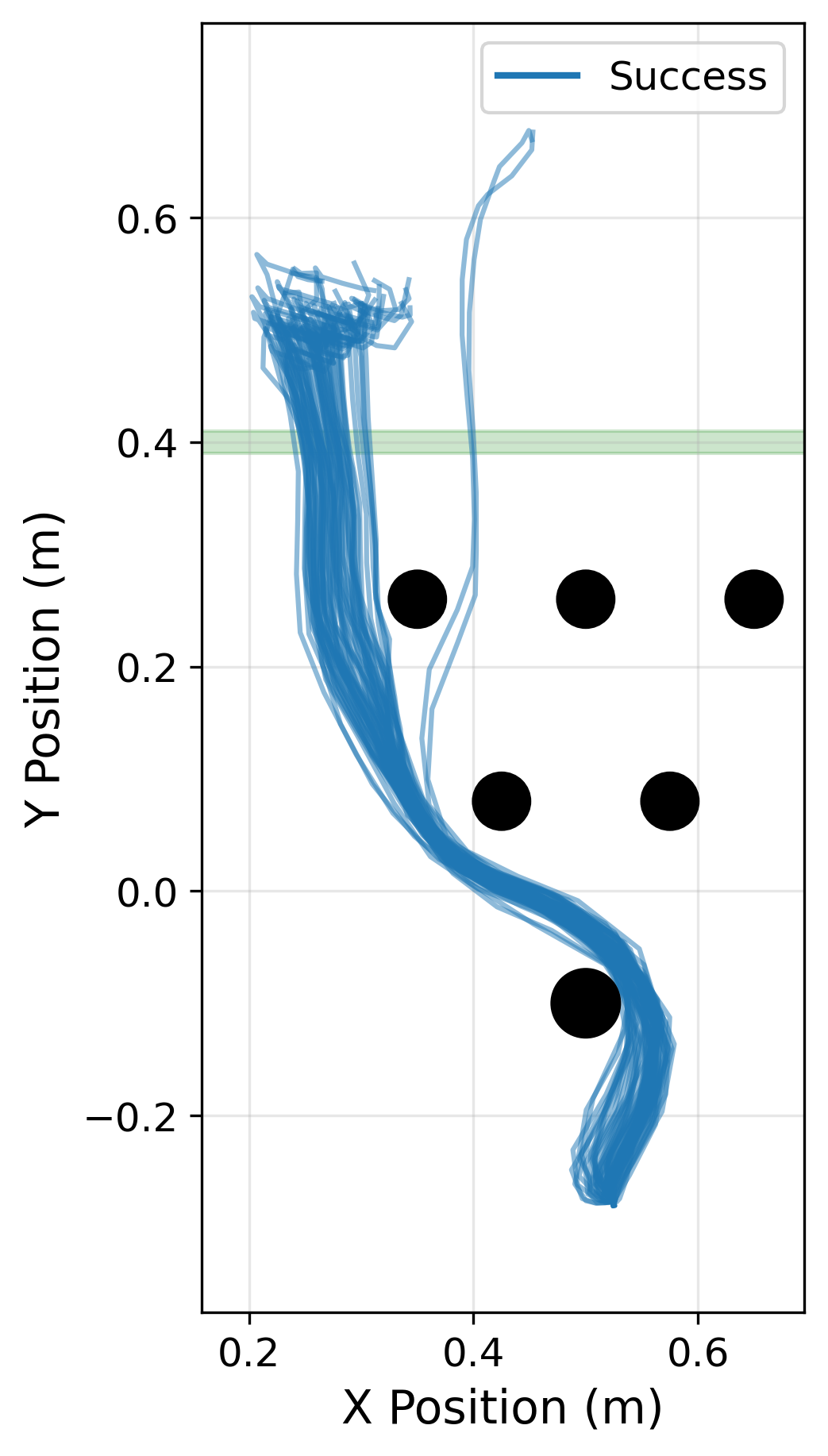}
        \caption{RoDif 40\%}
        
    \end{subfigure}

    \vspace{0.2cm} 

    \begin{subfigure}[t]{0.25\textwidth}
        \centering
        \includegraphics[height=5cm, keepaspectratio]{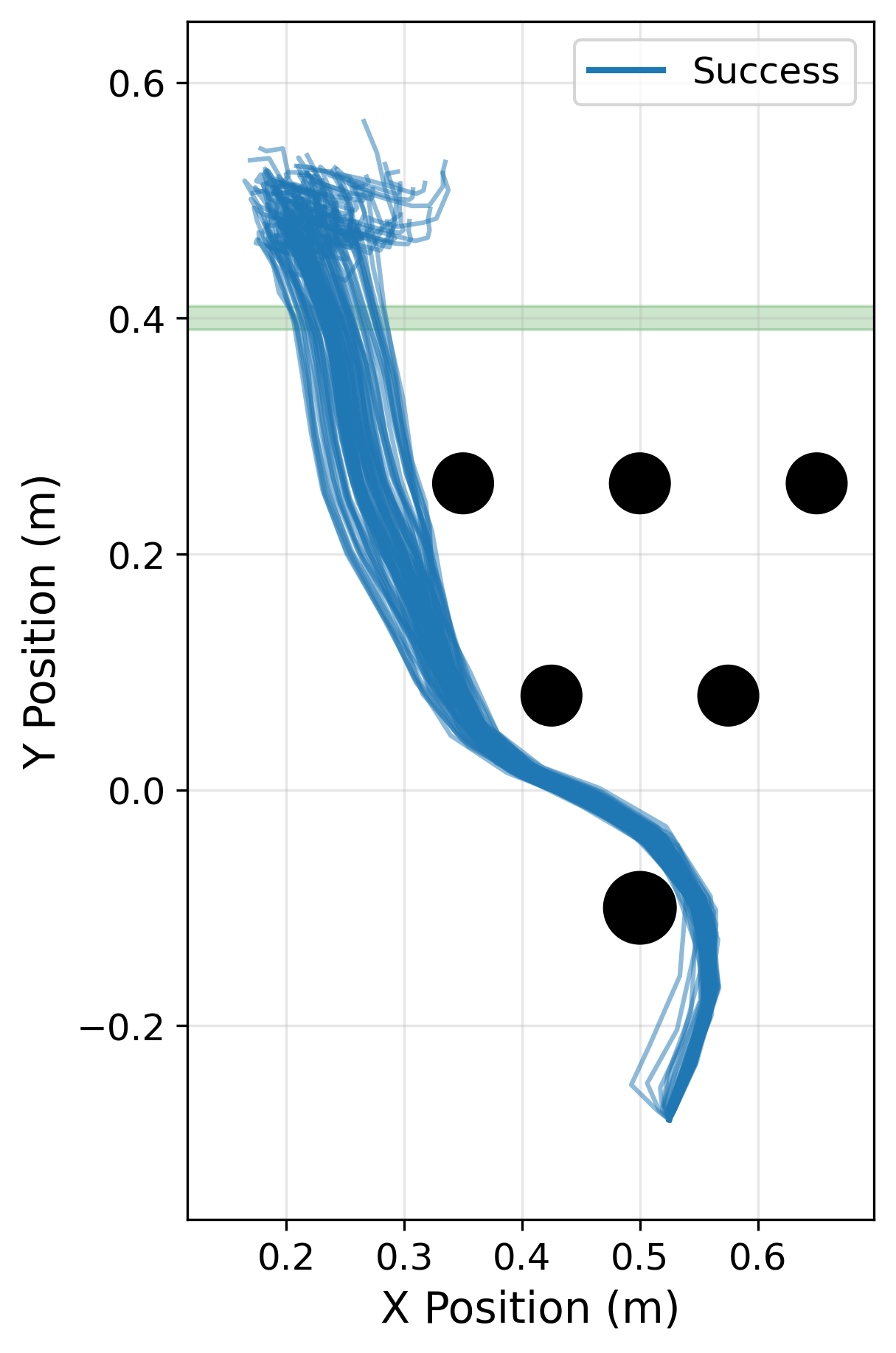}
        \caption{DP-DPO 0\%}
        
    \end{subfigure}\hfill
    \begin{subfigure}[t]{0.25\textwidth}
        \centering
        \includegraphics[height=5cm, keepaspectratio]{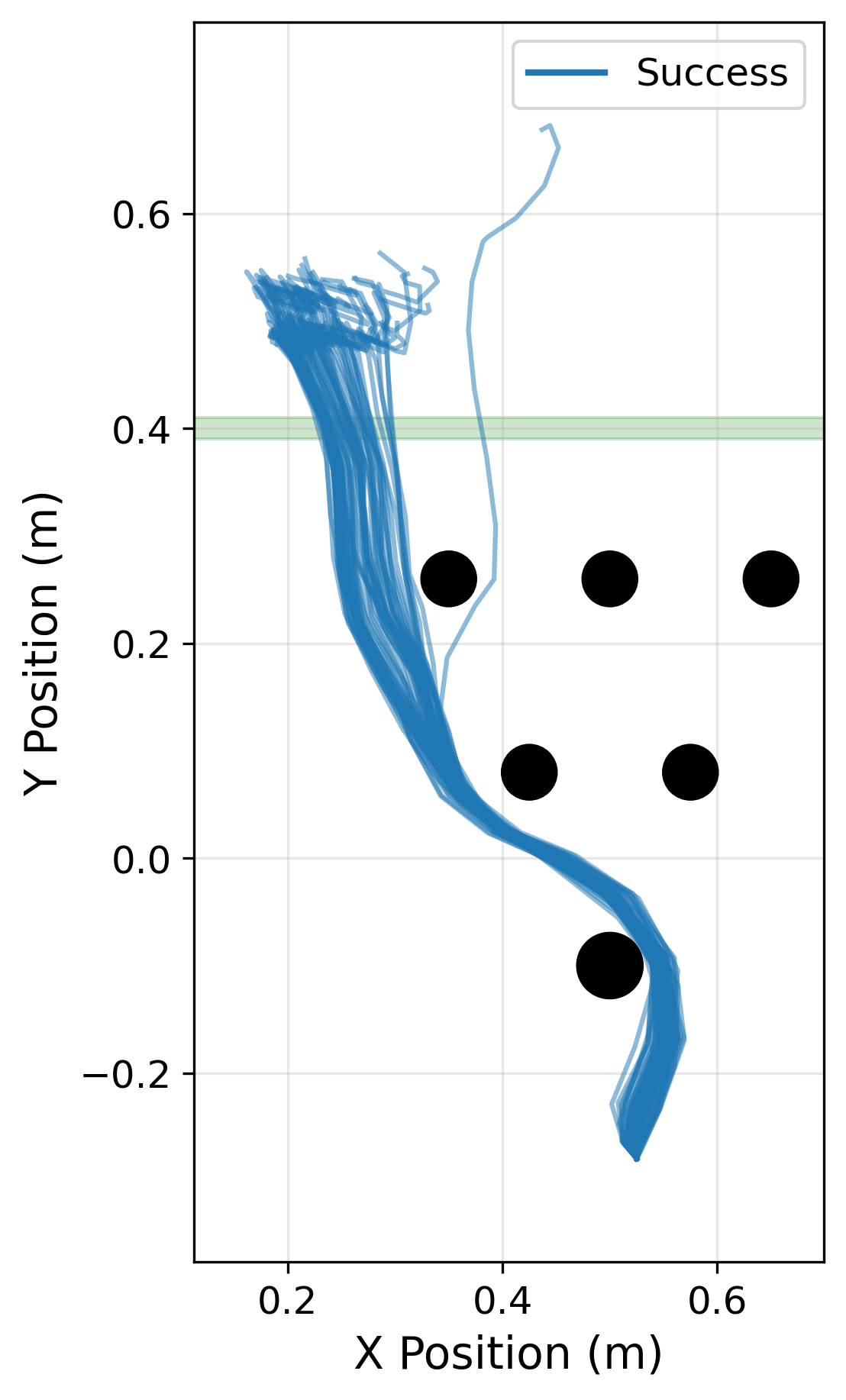} 
        \caption{DP-DPO 20\%}
        
    \end{subfigure}\hfill
    \begin{subfigure}[t]{0.25\textwidth}
        \centering
        \includegraphics[height=5cm, keepaspectratio]{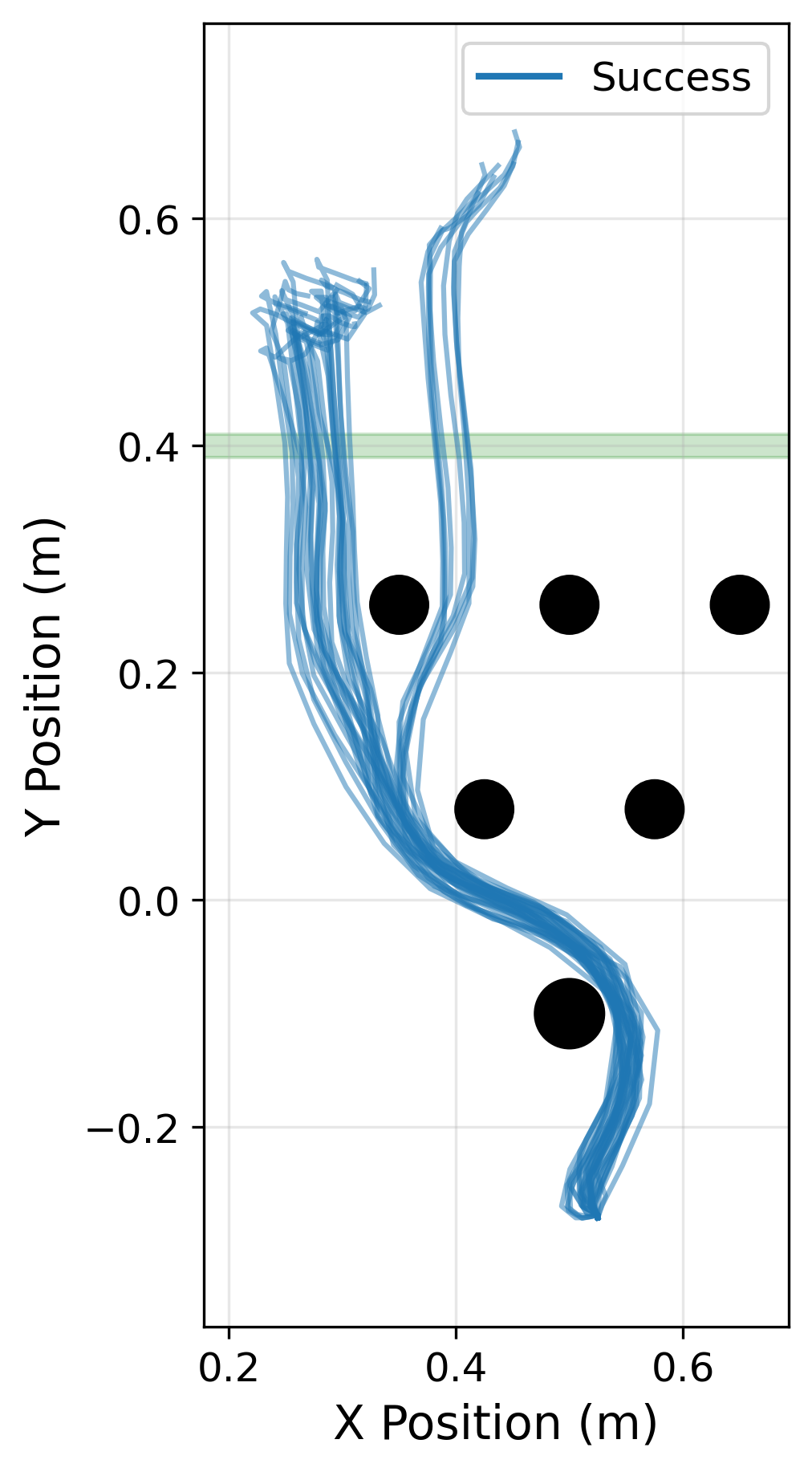}
        \caption{DP-DPO 30\%}
       
    \end{subfigure}\hfill
    \begin{subfigure}[t]{0.25\textwidth}
        \centering
        \includegraphics[height=5cm, keepaspectratio]{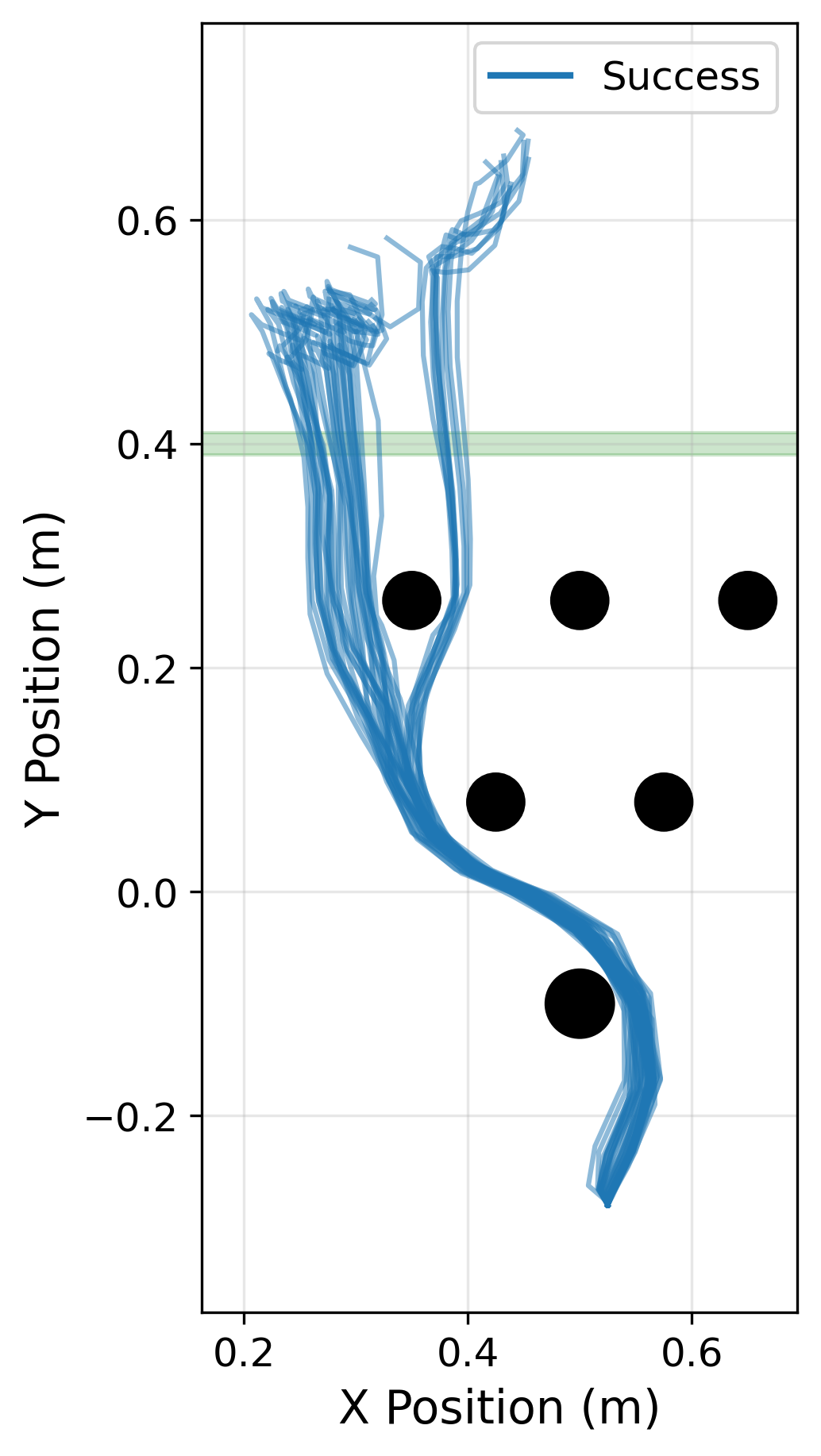}
        \caption{DP-DPO 40\%}

    \end{subfigure}

    \caption{Mode visualization of the fine-tuned policy using RoDif and DP-DPO (\ref{eq:dense_loss}) across different corruption rate for Avoid Task. Avoid from left mode was selected as winner mode for fine-tuning. DP-DPO consists of loser modes even after fine-tuning the pretrained policy at different noise levels.}
    \label{fig:noise_levels_DPO}
\end{figure*}



\begin{figure}[!htbp]
    \centering
    \includegraphics[width=0.5\linewidth]{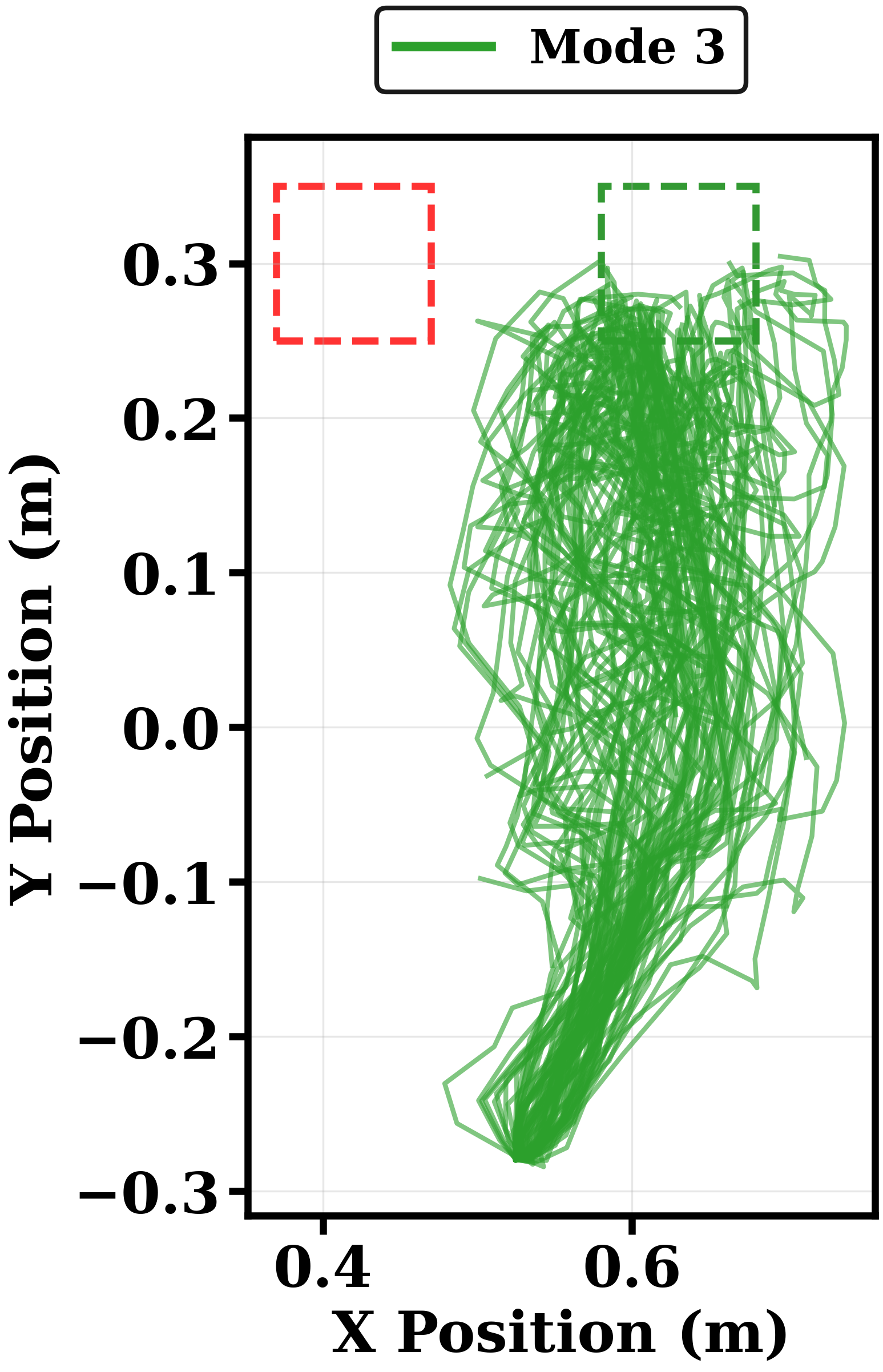}
    \caption{10 Winner and 10 Loser fine-tuning. Mode 3 i.e. pushing green block to green position was seleted as winner mode for fine-tuning.}
    \label{fig:robust10}
\end{figure}

\end{document}